\crefname{appendix}{App.}{App.}
\theoremstyle{plain}
\newtheorem{theorem}{Theorem}[section]
\newtheorem{proposition}[theorem]{Proposition}
\theoremstyle{definition}
\newtheorem{assumption}[theorem]{Assumption}
\theoremstyle{remark}
\newtheorem*{rep@theorem}{\rep@title}
\newcommand{\newreptheorem}[2]{%
\newenvironment{rep#1}[1]{%
 \def\rep@title{#2 \ref{##1}}%
 \begin{rep@theorem}}%
 {\end{rep@theorem}}}
\title{Invariant Causal Set Covering Machines}
\author[1]{\href{mailto:<thibaud.godon@proton.me>?Subject=Your ICSCM 2025 paper}{Thibaud Godon}}
\author[1]{Baptiste Bauvin}
\author[1]{Pascal Germain}
\author[1]{Jacques Corbeil}
\author[1,2]{Alexandre Drouin}
\affil[1]{Université Laval}
\affil[2]{ServiceNow Research}
\begin{document}
\maketitle

\begin{abstract}
Rule-based models, such as decision trees, appeal to practitioners due to their interpretable nature.
However, the learning algorithms that produce such models are often vulnerable to spurious associations, and thus, they are not guaranteed to extract causally relevant insights.
This limitation reduces their utility in gaining mechanistic insights into a phenomenon of interest.
In this work, we build on ideas from the invariant causal prediction literature to propose \emph{Invariant Causal Set Covering Machines}, an extension of the classical Set Covering Machine (SCM) algorithm for conjunctions/disjunctions of binary-valued rules that provably avoids spurious associations.
The proposed method leverages structural assumptions about the functional form of such models, enabling an algorithm that identifies the causal parents of a variable of interest in polynomial time. 
We demonstrate the validity and efficiency of our approach through a simulation study and highlight its favorable performance compared to SCM in uncovering causal variables across real-world datasets.

\end{abstract}

\section{Introduction}

In some fields of application of machine learning, the use of learned models goes well beyond prediction.
Domain experts often rely on a deeper inspection of such models to extract mechanistic insights into complex systems.
For instance, in healthcare, one can train a model to predict predisposition to a disease based on genomics data~\citep{szymczak2009machine}.
Significant insights can then be obtained by understanding which genomic traits are used for prediction (biomarkers).
These constitute potential causes of the disease, which might be relevant targets in the elaboration of new therapies or drugs.

One kind of machine learning model that has been shown to allow for such scrutiny is rule-based models, such as decision trees~\citep{breiman1984}.
Such models make inferences by applying a series of binary-valued rules to their input 
(e.g., the presence or absence of a mutation).
In addition to their high level of interpretability, such models have been shown to scale particularly well to large feature sets and resist overfitting in the small data regime that is common in applications of machine learning to healthcare~\citep{Drouin_2019}. 
However, one must not be fooled by the ease of interpretation of such models, since the variables used for prediction may very well be spuriously associated with the outcome of interest.

In this work, we make a step towards alleviating this issue by proposing \emph{Invariant Causal Set Covering Machines}, an extension of the Set Covering Machine algorithm~\citep{scm} for conjunctive and disjunctive classifiers, that avoids, as much as possible, relying on spurious associations for prediction.
Our work builds on previous advances in invariant causal prediction~\citep{Peters_Buhlmann_Meinshausen_2016,Heinze-Deml_Peters_Meinshausen_2018,Buhlmann_2020}, where data is assumed to be collected in multiple environments (e.g., populations from different geographic locations, measurement devices with different calibrations, etc.).

\textbf{Contributions:} \\[-7mm]
\begin{enumerate}
    \item  We propose the Invariant Causal Set Covering Machines (ICSCM), an extension of the Set Covering Machine algorithm~\citep{scm} that relies on invariant causal prediction to avoid spurious associations (\cref{sec:icscm}). \\[-5mm]
    \item We support this new algorithm with theoretical results expressing conditions under which the causal parents of an outcome of interest can be recovered in polynomial time (\cref{thm:leafs}).\\[-5mm]
    \item  We conduct an empirical study with simulated data to verify the correctness of the theory and the efficiency of the algorithm (\cref{sec:results-simulated}), and then show, using real data, that the ICSCM tends to identify more causal parents than its SCM counterpart  (\cref{sec:results-real}).
\end{enumerate}

\section{Background and Related Work}

\subsection{Problem setting}\label{sec:problem-setting}

We consider a standard supervised binary classification setting, where the observations are  feature-label pairs $(\xb,y)$,  with $\xb \in \Xcal$ and $y \in \{0, 1\}$.
Further, as in \citet{Heinze-Deml_Peters_Meinshausen_2018}, we consider the case where the observations have been collected in multiple environments $e \in \Ecal$, which correspond to various experimental conditions (e.g., populations from different geographic locations, measurement devices with different calibrations, etc.).
Hence, we assume access to observations $(\xb, y, e) \sim P(\Xb, Y, E)$, where the data-generating process $P$ factorizes according to $G$ depicted at \cref{fig:causal-graph}. The random variable $\Xb$ is segmented as $\Xb = [\Xb_A, \Xb_B, \Xb_C]$, respectively denoting the causal parents of $Y$, variables that are not directly related to $Y$, and the causal children of $Y$.
Formally, we make the following assumptions, which are common in the causal discovery literature \citep[see][]{glymour2019review}:
\begin{assumption}\label{ass:markov}
    (Causal Markov assumption) We assume that \emph{d-separation}\footnote{See \citet{koller2009probabilistic} (Chap. 3) for an introduction.} in $G$ implies conditional independence in~$P$, i.e., for arbitrary random variables $U$, $V$, $W$: 
    $$U \independent_{\!\!G}\, V \mid W \ \Rightarrow \ U \independent_{\!\!P}\,V \mid W\,,$$
    where $\independent_{\!\!G}$ denotes d-separation and $\independent_{\!\!P}$ denotes independence in distribution.
\end{assumption}
\begin{assumption}\label{ass:faith}
    (Faithfulness assumption) For arbitrary random variables $U$, $V$, $W$: 
    $$U \!\independent_{\!\!P}\, V \mid W \ \Rightarrow \ U \independent_{\!\!G}\,V \mid W\,.
    $$
\end{assumption}

Further, we assume that the environments are defined as follows, matching the setting of \citet{Heinze-Deml_Peters_Meinshausen_2018}.
\begin{assumption}\label{ass:envs}
    (Environments)
    The environment $E$ is a causal parent of $\Xb$, but it is not a causal parent of~$Y$ (see \cref{fig:causal-graph}).
    In other words, the structural equation $Y {\coloneqq} f(\Xb_A, \epsilon)$, with noise \mbox{$\epsilon \independent_{\!\!P}\,\Xb_A$}, is invariant across environments.
\end{assumption}
Intuitively, this means that the distribution of features $\Xb$ can change across environments, but the mechanism that produces $Y$ from its causal parents $\Xb_A$ must remain stable.

\paragraph{Goal:}
We aim to learn a classifier $h: \Xcal \rightarrow \{0, 1\}$, such that $h(\xb) = h(\xb_A) = y$ with high probability, i.e., that closely approximates $Y$ while relying solely on its \emph{causal parents}~$\Xb_A$ and no other \emph{spurious association}.

Moreover, we are interested in learning classifiers that are conjunctive in nature, so we make the following additional assumption on the functional form of $Y \coloneqq f(\Xb_A, \epsilon)$:
\begin{assumption}\label{ass:conj}
    (Functional form) The function $f$ is s.t. 
    \begin{equation}\label{eq:conj-assumption}
        f(\Xb_A, \epsilon) \eqdef g\big(r_1(\Xb_A, \epsilon_1) \,\wedge\, \dots \,\wedge\, r_d(\Xb_A, \epsilon_d), \epsilon_g\big),
    \end{equation}
where $\langle r_1,\ldots,r_d\rangle$ are arbitrary binary-valued rules (e.g., threshold functions on the value of features), $\epsilon \,{\coloneqq}\, \langle\epsilon_1,\ldots, \epsilon_d, \epsilon_g\rangle$ are noise terms sampled from arbitrary independent distributions, and $g$ is a function that tampers with the outcome based on $\epsilon_g$.
\end{assumption}

\begin{figure}[t]
\begin{center}
\centerline{\includegraphics[width=0.6\columnwidth]{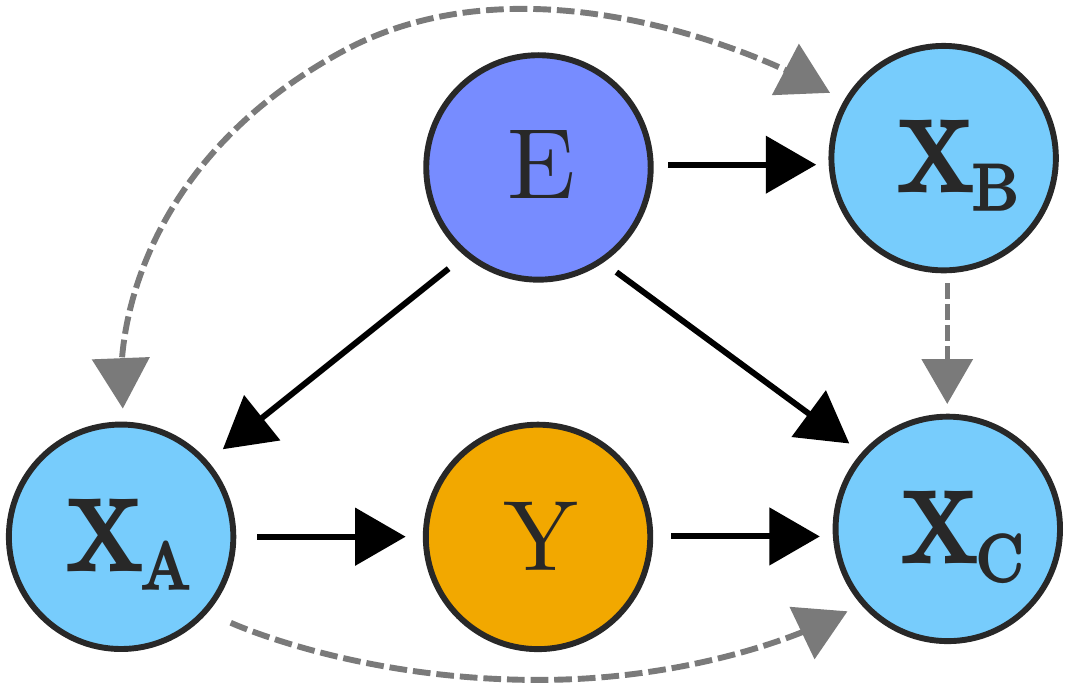}}
\caption{Graphical assumptions: the edge between $\Xb_A$ and $\Xb_B$ can be oriented in either way, but the resulting $G$ must be a Directed Acyclic Graph (DAG). Dashed edges are optional.}
\label{fig:causal-graph}
\end{center}
\end{figure}

\subsection{Set Covering Machines}\label{sec:scm}

We now introduce the Set Covering Machine (SCM) algorithm\footnote{Not to be confused with \emph{\textbf{S}tructural \textbf{C}ausal \textbf{M}odels}. This low-probability clash is unfortunately beyond our control.}~\citep{scm}, which can be used to learn classifiers that are conjunctive in nature.
Later on, at \cref{sec:icscm}, we will explain how this algorithm can be extended to achieve the goal defined at \cref{sec:problem-setting}.

Let ($\xb$, $y$) be a pair of features and binary label, as described at \cref{sec:problem-setting}.
The SCM algorithm is a greedy learning algorithm that attempts to find the shortest conjunction or disjunction,
\begin{align*}
h_{\rm conj}(\xb) = r_1(\xb) & \wedge \ldots \wedge r_d(\xb)\\
\mbox{ or } \ h_{\rm disj}(\xb) = r_1(\xb) & \vee \ldots \vee r_d(\xb)\,,
\end{align*}
where the $r_i$ are binary-valued rules, that minimizes the expected prediction error: 
$$\textrm{Err}_P (h) \ = \expect_{(\xb, y) \sim P(\Xb, Y)} I[h(\xb) \,{\neq}\, y],$$
where $I[\text{True}] = 1$ and 0 otherwise.

For conciseness, the rest of the presentation will focus on learning conjunctions ($h \coloneqq h_{\rm conj})$.
This is not restrictive since any algorithm for learning conjunctions can also be applied to learning disjunctions by the simple application of De Morgan's law, i.e., $\neg \big(r_1(\xb) \wedge r_2(\xb)\big) = \neg r_1(\xb) \vee \neg r_2(\xb)$.
Hence, all the forthcoming findings and observations equally apply to the case of learning disjunctions.

\cref{alg:scm} shows the pseudocode for learning a conjunction with the SCM algorithm.
The algorithm starts with a data sample $\Scal = \{ (\xb_i, y_i) \}_{i = 1}^m \sim P(\Xb, Y)^m$ and a set $\Rcal$ of candidate binary-valued rules.
It builds a conjunction by iteratively adding rules $r_i \in \Rcal$. At each iteration, the best rule is selected based on a utility score $U_i$, which is a tradeoff between the number of negative examples correctly classified (i.e., covered by the rule\footnote{Hence the name: Set \emph{Covering} Machines}) and the number of positive examples misclassified by the rule.
Then, the examples for which the model's outcome is settled (i.e., $h(\xb) = 0$) are discarded, and the next iteration is performed considering the examples that remain to be classified.\footnote{Since $h$ is a conjunction, $h(\xb) = 0$ if any $r_i(\xb) = 0$. The outcome of the model cannot be changed by subsequent rules.}
The training stops when no negative examples remain to cover or when the maximum conjunction length $n$, which is a hyperparameter, is reached.
Overall, the running time complexity of this algorithm is $O(m \cdot |\Rcal| \cdot n)$.

The SCM algorithm is thus a simple and efficient way of learning conjunctive classifiers that minimize the expected prediction error. These predictive models have the benefit of being highly interpretable since their decision function is simple, and they typically rely on a few rules that can be inspected by domain experts \citep[e.g., as in][]{Drouin_2019}. However, it has one significant pitfall: nothing prevents $h(\xb)$ from relying on spurious associations between $\Xb$ and $Y$.
In \cref{sec:icscm}, we set out to alleviate this issue.

\begin{algorithm}[tb]
   \caption{Set Covering Machine}
   \label{alg:scm}
\begin{algorithmic}
   \STATE {\bfseries Input:} $\Scal = \{(\xb_i, y_i, e_i)\}_{i=1}^{m}$, a data sample
   \STATE {\bfseries Input:} $\Rcal$, a set of candidate binary-valued rules
   \STATE {\bfseries Input:} $p \in \mathbb{R}^{+}$, utility score hyperparameter
   \STATE {\bfseries Input:} $n \in \mathbb{Z}$, conjunction length hyperparameter
   \STATE $\Pcal \leftarrow \{ (\xb, y) \in \Scal \mid y = 1 \}$ \hfill\COMMENT{Positive examples}
   \STATE $\Ncal \leftarrow \{ (\xb, y) \in \Scal \mid y = 0 \}$ \hfill\COMMENT{Negative examples}
   \STATE $\Hcal \leftarrow \emptyset$ \hfill\COMMENT{Rules in the conjunction}\\
   \WHILE{$|\Hcal| < n$  and $\Ncal\neq\emptyset$}
   \FOR{each rule $r_i \in \Rcal$}
       \STATE $\Acal_i \leftarrow \{(\xb, y) \in \Ncal \mid r_i(\xb) = y\}$
       \STATE $\Bcal_i \leftarrow \{(\xb, y) \in \Pcal \mid r_i(\xb) \not= y\}$
       \STATE $U_i \gets \lvert \Acal_i \lvert \,-\, p \cdot \lvert \Bcal_i \lvert $ \hfill \COMMENT{Utility computation}
   \ENDFOR
   \STATE $i^\star \gets \textrm{argmax}_{1\leq i \leq |\Rcal|} U_i$
   \STATE $\Hcal \leftarrow \Hcal \cup \{r_{i^\star}\}$\hfill \COMMENT{Add best utility rule to the model}\\
   \STATE $\Ncal \leftarrow \Ncal \setminus \Acal_{i^\star}$ \hfill \COMMENT{Remove final samples}
   \STATE $\Pcal \leftarrow \Pcal \setminus \Bcal_{i^\star}$
   \ENDWHILE
   \OUTPUT the conjunction $h(x) = \bigwedge_{r \in \Hcal} r(x)$
\end{algorithmic}
\normalsize
\end{algorithm}

\subsection{Invariant Causal Prediction}

The two most related works from the causal inference literature are the seminal works of \citet{Peters_Buhlmann_Meinshausen_2016} and \citet{Heinze-Deml_Peters_Meinshausen_2018} on invariant causal prediction.
Both of these works rely on a multi-environment setting like the one described at \cref{sec:problem-setting}.
Moreover, both are based on the idea that the conditional distribution of $Y$, given all of its causal parents $\Xb_A$, should be invariant across environments; an idea that we also exploit in \cref{sec:icscm}.
In contrast with our work, \citet{Peters_Buhlmann_Meinshausen_2016} require the structural equation between $Y$ and $\Xb_A$ to be linear, while we assume it to be conjunctive (see \cref{ass:conj}).
As for \citet{Heinze-Deml_Peters_Meinshausen_2018}, they consider non-linear structural equations, which are compatible with our setting.
However, identifying $\Xb_A$ using their approach requires to perform conditional independence tests for all sets of potential parents, which requires $O(2^{\lvert \Xb \lvert})$ time, where $\lvert \Xb \lvert$ is the number of feature variables (see \cref{sec:problem-setting}).
In sharp contrast, we show that, by exploiting the conjunctive nature of the structural equation of $Y$, the causal parents $\Xb_A$ can be identified in polynomial time.

\begin{figure}[t]
\begin{center}
\centerline{\includegraphics[width=0.7\columnwidth]{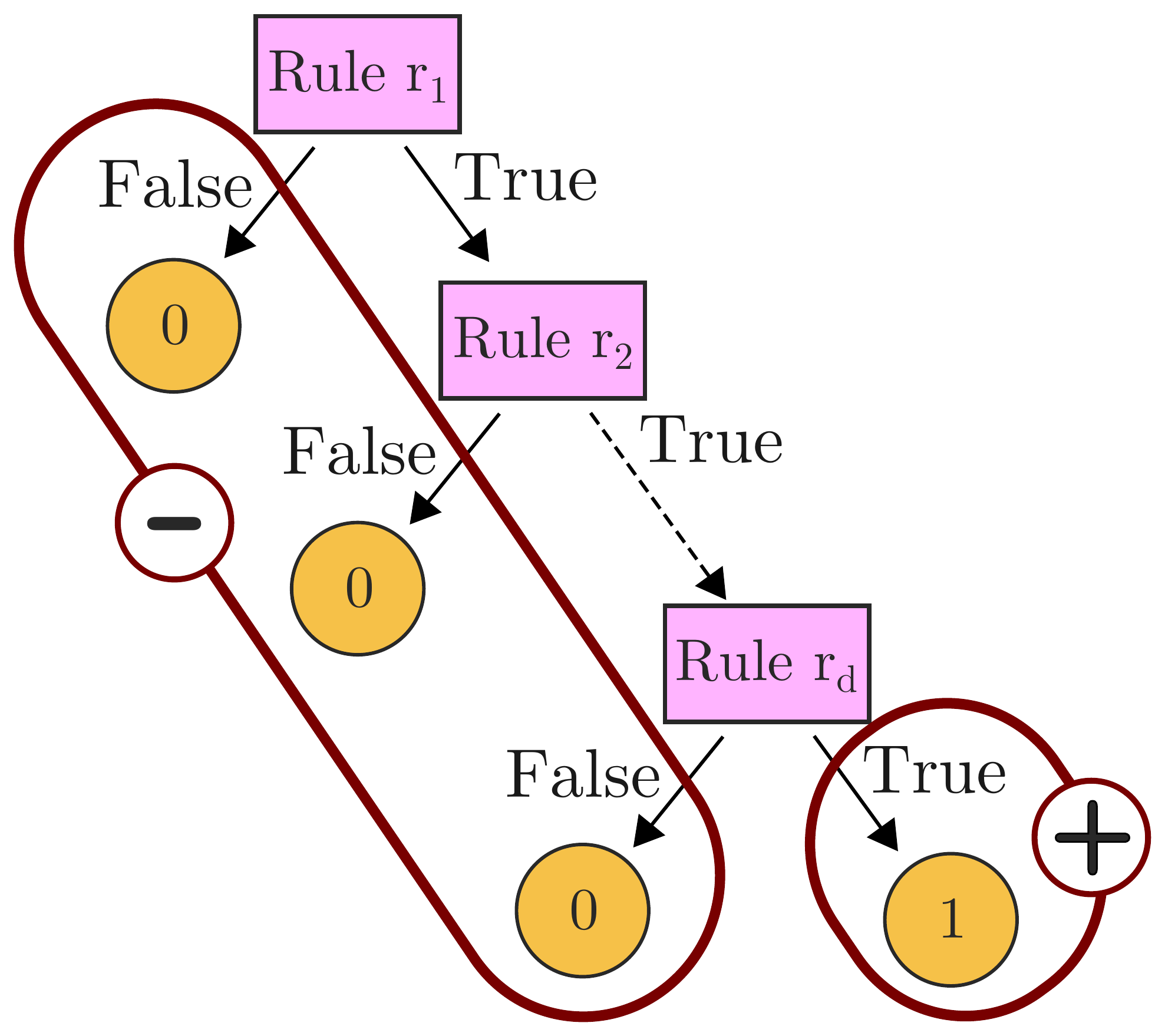}}
\caption{Tree-based representation of a $d$-rule conjunction. Positive and negative leaves are emphasized with $+$ and $-$, respectively.
}
\label{fig:tree-representation}
\end{center}
\end{figure}

\section{Invariant Causal Set Covering Machines}\label{sec:icscm}

We now propose an extension of the classical Set Covering Machine algorithm that exploits invariances across environments (see \cref{ass:envs}) to learn classifiers that rely solely on the causal parents $\Xb_A$ of $Y$.

\subsection{Tree-based representation}
To facilitate the presentation, let us introduce an alternative perspective on conjunctions.

Let $f(x) \eqdef r_1(x) \,\wedge\, \dots \,\wedge\, r_d(x)$ be an arbitrary conjunction of binary-valued rules $r_i$ applied to some input $x$.
If one assumes an ordering in the evaluation of the rules $r_i(x)$, then $f(x)$ corresponds to a simple decision tree composed of a single branch.
This is illustrated at \cref{fig:tree-representation}.
Such a tree has $d$ \emph{negative leaves} where $f(x) = 0$, which are attained when a rule $r_i(x) = 0$, and a single \emph{positive leaf} where $f(x) = 1$, which is attained when $r_i(x) = 1\,\,\forall i \in \{1, \dots, d\}$.

With this in mind, we propose the following result:

\begin{theorem}\label{thm:leafs}
(Model construction criteria)
Assume that the data-generating process follows the causal graph depicted at \cref{fig:causal-graph} and that Assumptions \ref{ass:markov}, \ref{ass:faith}, \ref{ass:envs}, and \ref{ass:conj} hold. Let 
$$f(\Xb^\star, \epsilon) = g(r_1(\Xb^\star, \epsilon_1) \,\wedge\, \dots \,\wedge\, r_d(\Xb^\star, \epsilon_d), \epsilon_g)\,,$$
with $\Xb^\star \subseteq \Xb$, be an arbitrary conjunction of $d$ binary-valued rules.

Without loss of generality, assume an arbitrary ordering of the rules $1 \ldots d$ and consider the tree-based representation depicted in \cref{fig:tree-representation}. We have that, if:

(i) the distribution of $Y$ in the $i$-th negative leaf satisfies
\begin{equation}\label{eq:thm-neg-leaf} 
    Y \independent_{\!\!P}\, E \ \mid \ \rb_{< i}(\Xb^\star, \mathbf{\epsilon}_{< i}) = \mathbf{1}, r_i(\Xb^\star, \epsilon_i) = 0\,,
\end{equation}
where $\rb_{< i}(\Xb^\star, \epsilon_{< i}) = \mathbf{1}$ denotes that all rules preceding $r_i$ in the ordering have value $1$, and

(ii) the distribution of $Y$ in the positive leaf satisfies
\begin{equation}\label{eq:thm-pos-leaf}
    Y \independent_{\!\!P}\, E \ \mid \ \rb_{< d}(\Xb^\star, \mathbf{\epsilon}_{< d}) = \mathbf{1}, r_d(\Xb^\star, \epsilon_d) = 1\,,
\end{equation}
then $\Xb_A \subseteq \Xb^\star$ and $\Xb_C \cap \Xb^\star = \emptyset$\,.
\end{theorem}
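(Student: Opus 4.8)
The plan is to prove both directions by translating the probabilistic (in)dependences of \eqref{eq:thm-neg-leaf} and \eqref{eq:thm-pos-leaf} into statements about d-separation, invoking the Causal Markov assumption (\cref{ass:markov}) for the ``if'' direction and Faithfulness together with uniform conditional dependence (\cref{ass:faith,ass:uniform-cond-dep}) for the ``only if'' direction. First I would augment $G$ with a node $R_i$ for each rule outcome $r_i(\Xb^\star,\epsilon_i)$, whose parents are the features of $\Xb^\star$ that $r_i$ reads together with the exogenous noise $\epsilon_i$, and a node $C = R_1 \wedge \dots \wedge R_d$ for the conjunction feeding into $Y = g(C,\epsilon_g)$. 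The single most important structural fact, inherited from the conjunctive functional form of \cref{ass:conj}, is that $Y$ depends on the features \emph{only} through $C$: every directed path from a feature to $Y$ must pass through some $R_i$ and then through $C$. I would also record the elementary consequence of \cref{ass:envs} that, since $E$ is not a parent of $Y$ and $\epsilon_g \independent_{\!\!P} E$, we have $Y \independent_{\!\!P} E \mid \Xb_A$.

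For the forward direction (assume $\Xb_A \subseteq \Xb^\star$ and $\Xb_C \cap \Xb^\star = \emptyset$) I would show that at every leaf the conditioning set d-separates $Y$ from $E$, so that \cref{ass:markov} yields the desired independence. At the positive leaf we condition on all of $R_1,\dots,R_d$; since every $E$-to-$Y$ path enters $Y$ through some conditioned $R_i$ as a non-collider, each such path is blocked, and because $\Xb_C \cap \Xb^\star = \emptyset$ no $R_i$ is a descendant of a child of $Y$, so conditioning opens no collider of the form $Y \to \Xb_C \leftarrow E$. At the $i$-th negative leaf we condition on $R_1 = 1, \dots, R_{i-1} = 1, R_i = 0$; the value $R_i = 0$ already determines $C = 0$, which (by the deterministic refinement of d-separation) blocks every remaining path reaching $Y$ through an unconditioned $R_k$ with $k > i$. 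The role of $\Xb_A \subseteq \Xb^\star$ is to guarantee that no residual path $E \to \Xb_A \to Y$ bypasses the rules: with all parents available, the entire dependence of $Y$ on the features is routed through $C$ and hence through the conditioned nodes.

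For the converse I would argue by contraposition, exhibiting a single leaf at which independence fails whenever $\Xb_A \not\subseteq \Xb^\star$ or $\Xb_C \cap \Xb^\star \neq \emptyset$. If some child $X_C \in \Xb_C$ is read by a rule $R_k$, then conditioning on $R_k$ (a descendant of the collider $X_C$) activates the path $Y \to X_C \leftarrow E$ at the leaf that first fixes $R_k$, so $Y \not\independent_{\!\!G} E$ there; if some parent $X_A \in \Xb_A$ is absent from $\Xb^\star$, then at the positive leaf the path $E \to X_A \to Y$ stays unblocked because $X_A$ is never conditioned, so again $Y \not\independent_{\!\!G} E$. The delicate point is that d-connection only guarantees dependence for \emph{some} value of the conditioning variables, whereas a leaf pins the rules to \emph{specific} values; this is exactly where \cref{ass:uniform-cond-dep} is indispensable, as it upgrades d-connection to conditional dependence at \emph{every} value of the conditioning set, and thus at the particular leaf, after which Faithfulness (\cref{ass:faith}) delivers the violated independence. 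I expect the main obstacle to be precisely this pointwise control, combined with the bookkeeping of determinism at the negative leaves (where $C$ is fixed by one rule rather than the whole conditioning set): making the deterministic d-separation argument rigorous, and verifying that the witnessing leaf lies in the support of its conditioning event, is the crux of the proof.
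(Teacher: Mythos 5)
Your proposal is correct and follows essentially the same route as the paper's proof: the paper likewise augments the graph with implicit rule variables $R_i$ (\cref{fig:app_dag_proof}), proves sufficiency via d-separation plus the Causal Markov assumption---handling the unconditioned rules $\Rb_{>i}$ at a negative leaf by an explicit marginalization where you instead invoke deterministic blocking from $R_i=0$ fixing the conjunction, the same substance---and proves necessity by contraposition using the unblocked $E \rightarrow \Xb_A \rightarrow Y$ path and the activated collider $Y \rightarrow \Xb_C \leftarrow E$. If anything, you are slightly more careful than the paper's \cref{lem:leaf-necessity}, which cites only faithfulness, whereas your per-leaf argument correctly pinpoints \cref{ass:uniform-cond-dep} as the step that upgrades d-connection to dependence at the \emph{specific} conditioning values realized in a leaf (and you also flag the support caveat, which the paper leaves implicit).
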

 The proof below makes use of \cref{ass:markov} and the conjunctive nature of $f$; if one of the rules has value $r_i(\Xb^\star, \epsilon_i) = 0$, then the values of the subsequent rules in the ordering have no impact on the final outcome of the conjunction.

\begin{figure}[t]
    \centering
    \includegraphics[width=0.7\columnwidth]{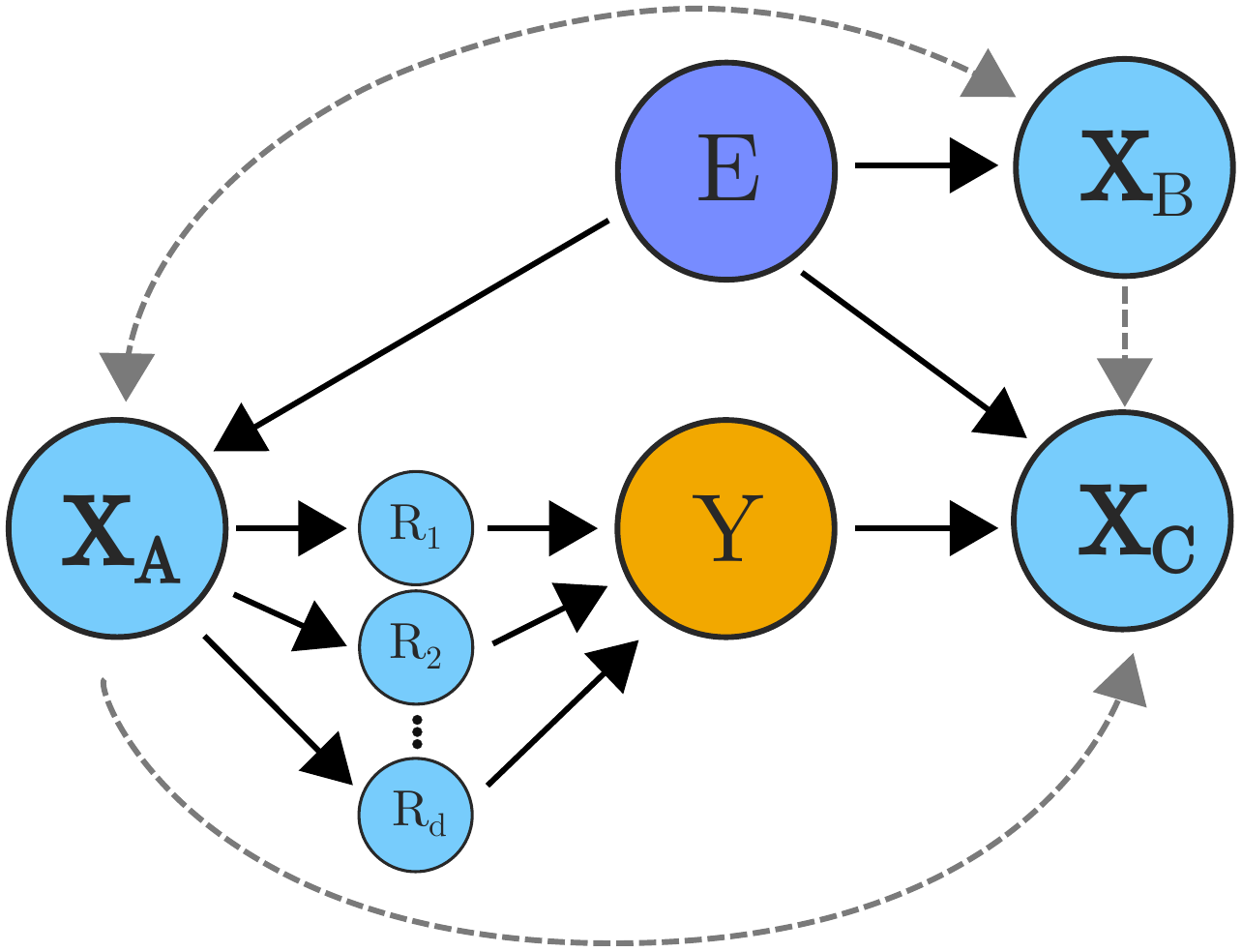}
    \caption{Implicit variables for binary-valued rules introduced in the proof of \cref{thm:leafs}: this figure is an expanded version of the causal graph illustrated in \cref{fig:causal-graph} where the rules in the conjunction (Eq.~ \ref{eq:conj-assumption}) are represented as random variables that mediate all paths from $\Xb_A$ to $Y$. }
    \label{fig:app_dag_proof}
\end{figure}

\begin{proof}
For simplicity, we use $R_i$ to denote the random variable $r_i(\Xb^\star, \epsilon_i)$.
This amounts to introducing implicit variables for each binary-valued rule in the causal graph, as depicted by \cref{fig:app_dag_proof}. In the following, we slightly abuse the notation by using $r_i$ to denote a value taken on by $R_i$.

Let us prove that properties \emph{(i)} and \emph{(ii)} implies $\Xb_A \subseteq \Xb^\star$ and $\Xb_C \cap \Xb^\star = \emptyset$ by contradiction. We consider two cases.

    \paragraph{Case 1: Suppose that the properties at \cref{eq:thm-neg-leaf} and \cref{eq:thm-pos-leaf} hold, but that $\Xb_A \not\subseteq \Xb^\star$.}
    We have that $\Xb_A \not\subseteq \Xb^\star$ and thus some of the causal parents of $Y$ are not in $\Xb^\star$, i.e., there exists $X_{A_i} \in \Xb_A$ such that $X_{A_i} \notin \Xb^\star$.
    Hence, we have that $Y \not\independent_{\!\!P}\, E \mid \Xb^\star$, because there exists an unblocked path $E \rightarrow  X_{A_i} \rightarrow Y$.
    Because $\Xb^\star$ contains all the causal parents of $R_1, \ldots, R_d$, we also have that $Y \not\independent_{\!\!P}\, E \mid R_1, \ldots, R_d$.
    By the definition of conditional dependence, this means 
    \begin{align} \label{eq:pascal-met-en-evidence}
    \exists y,& e, r_1, \ldots, r_d \mbox{ such that }\\ \nonumber
        &P(Y=y \mid R_1 = r_1, \ldots, R_d = r_d, E=e) \\ \nonumber
        &\ \neq P(Y=y \mid R_1 = r_1, \ldots, R_d = r_d)\,.
    \end{align}    
    Recall that, given the conjunctive nature of $f(\Xb^\star, \epsilon)$, each combination of $r_1, \ldots, r_d$ corresponds to a leaf in the conjunction (see \cref{fig:tree-representation}).
    Therefore, depending on the value of $r_1, \ldots, r_d$, we will reach a contradiction for either a negative or a positive leaf:
    \begin{itemize}
        \item \textbf{Negative leaves:} $\exists j, r_j = 0$, then we have 
        \begin{align*}
             &P(Y \mid R_1 = 1, \ldots, R_{j-1} = 1, R_j = 0, E=e) \\
             &\ \neq  P(Y \mid R_1 = 1, \ldots, R_{j-1} = 1, R_j = 0)\,,
        \end{align*}
        which is in contradiction with \cref{eq:thm-neg-leaf}.

        \item \textbf{Positive leaf:} $\forall j, r_j = 1$, then we have 
        \begin{align*}
            &P(Y=y \mid R_1 = 1, \ldots, R_d = 1, E=e) \\
            &\ \neq P(Y=y \mid R_1 = 1, \ldots, R_d = 1)\,,
        \end{align*}
        which is in contradiction with \cref{eq:thm-pos-leaf}.

    \end{itemize}

    \paragraph{Case 2 : Suppose that the properties at \cref{eq:thm-neg-leaf} and \cref{eq:thm-pos-leaf} hold, but that $\Xb_C \cap \Xb^\star \not= \emptyset$.}

There are some descendants of $Y$ in $\Xb^\star$, i.e., there exists $X_{C_i} \in \Xb_C$ such that $X_{C_i} \,{\in}\, \Xb^\star$.
    Recall, from our graphical assumptions (see \cref{fig:causal-graph}), the existence of the v-structure $Y \rightarrow X_C \leftarrow E$.
    Since, $X_{C_i} \,{\in}\, \Xb^\star$, there exists at least one $R_j$ such that $X_{C_i} \rightarrow R_j$.
    Because conditioning on $R_{j}$ opens the path $E -  X_{C_i} - Y$, we have that $Y \not\independent_{\!\!P}\, E \mid R_1, \ldots, R_d$.
    By the definition of conditional dependence, \cref{eq:pascal-met-en-evidence} holds. Thus, by an identical argument to case 1, we reach a contradiction to \cref{eq:thm-neg-leaf} and \cref{eq:thm-pos-leaf} in negative and positive leaves, respectively.   

\end{proof}

\subsection{Model construction}
\label{sec:construction}

Theorem~\ref{thm:leafs}
gives us criteria that can be evaluated at each step of building a conjunction and that guarantee reliance on all $\Xb_A$ but none of the $\Xb_C$. %
We thus propose to modify the original SCM algorithm to
\begin{enumerate}
    \item[(i)] prevent adding rules to the conjunction that would contradict the property of \cref{eq:thm-neg-leaf}; and
     \item[(ii)] stop adding rules to the conjunction when the property of \cref{eq:thm-pos-leaf} is satisfied.
\end{enumerate}
The above criteria (i) and (ii) can be evaluated by statistical independence tests, up to some significance level $\alpha\in \mathbb R^+$, which is a hyperparameter of our proposed algorithm. 
Note that, in our experiments, we use a chi-squared~($\chi^2$) hypothesis test with $\alpha=0.05$. 

\paragraph{Applying Criterion (i).} Consider the inner loop of \cref{alg:scm}: Let $k{=}|\Hcal|$ be the number of rules previously added to the conjunction, $r_i{\in}\Rcal$ be a candidate rule, and $(\xb^{(i)}_j, y^{(i)}_j, e^{(i)}_j){\in}\Acal_i \cup \Bcal_i$ denote the samples that would belong to a newly added negative leaf if $r_i$ were added to the conjunction. Denoting  with $k'=k+1$, we have 
$$ (y^{(i)}_j, e^{(i)}_j) {\sim} Y \times E \mid \rb_{< k'}(\Xb^\star, \mathbf{\epsilon}_{< k'}) {=} \mathbf{1}, r_{k'}(\Xb^\star, \epsilon_{k'}) {=} 0.$$
To implement Criterion (i), we simply disregard the rule~$r_i$ if the p-value of a statistical independence test between the observed $y^{(i)}_j$ and $ e^{(i)}_j$ from $\Acal_i \cup \Bcal_i$ is lower than the threshold $\alpha$.  Among the remaining rules, the one with the maximum utility score $U_i$ is added to the model. 

\paragraph{Applying Criterion (ii).} 
Consider the end of the outer loop of \cref{alg:scm}. Let $k'{=}|\Hcal|$ be the number of rules after the best rule $r_{i^*}$ is added to the model.
Then, the samples $(\xb'_j, y'_j, e'_j){\in}\Pcal \cup \Ncal$ belong to the newly created positive leaf. We have 
$$ (y'_j, e'_j) {\sim} Y \times E \mid \rb_{< k'}(\Xb^\star, \mathbf{\epsilon}_{< k'}) {=} \mathbf{1}, r_{k'}(\Xb^\star, \epsilon_{k'}) {=} 1.$$
To implement Criterion (ii), we stop adding rules to the model (i.e., we exit the \emph{while} loop) when the p-value of a statistical intendance test between the observed $y'_j$ and $ e'_j$ from $\Pcal \cup \Ncal$ is greater than the threshold $\alpha$.

\begin{algorithm}[t]
   \caption{Invariant Causal Set Covering Machine}
   \label{alg:icscm}
\begin{algorithmic}
   \STATE {\bfseries Input:} $\Scal = \{(\xb_i, y_i, e_i)\}_{i=1}^{m}$, a data sample
   \STATE {\bfseries Input:} $\Rcal$, a set of candidate binary-valued rules
   \STATE {\bfseries Input:} $p \in \mathbb{R}^{+}$, utility score hyperparameter
   \STATE {\bfseries Input:} $n \in \mathbb{Z}$, conjunction length hyperparameter
   \STATE \blue{{\bfseries Input:} $\alpha \in \mathbb{R}^{+}$, independence threshold hyperparameter}
   \vspace{1mm}
   \STATE $\Pcal \leftarrow \{ (\xb, y, e) \in \Scal \mid y = 1 \}$ \hfill\COMMENT{Positive examples}
   \STATE $\Ncal \leftarrow \{ (\xb, y, e) \in \Scal \mid y = 0 \}$ \hfill\COMMENT{Negative examples}
   \STATE $\Hcal \leftarrow \emptyset$ \hfill\COMMENT{Rules in the conjunction}
   \STATE \blue{$\gamma \leftarrow 1$ \hfill\COMMENT{A stopping criterion indicator}}\\[2mm]
   \WHILE{$|\Hcal| < n$  and $\Ncal\neq\emptyset$ \blue{and $\gamma < \alpha$}}
   \vspace{1mm}
   \FOR{each rule $r_i \in \Rcal$}
       \STATE $\Acal_i \leftarrow \{(\xb, y, e) \in \Ncal \mid r_i(\xb) = y\}$
       \STATE $\Bcal_i \leftarrow \{(\xb, y, e) \in \Pcal \mid r_i(\xb) \not= y\}$
        \STATE \blue{$\pi \leftarrow $ p-value of a \textbf{independence test} between $y^{(i)}_j$ and $ e^{(i)}_j$ from $\Acal_i \cup \Bcal_i$. \hfill \COMMENT{  \cref{sec:construction}, Criterion (i)}}
        \STATE $U_i \gets (\lvert \Acal_k \lvert \,-\, p \cdot \lvert \Bcal_k \lvert )$ \blue{\textbf{if} $\pi > \alpha$ \textbf{else} $- \infty$}
   \ENDFOR
   \vspace{1mm}
\STATE $i^\star \gets \textrm{argmax}_{1\leq i \leq |\Rcal|} U_i$\\[1mm]

   \STATE \blue{\textbf{if} $U_{i^\star} = - \infty$ \textbf{then} break
    \hfill\COMMENT{Stop adding rules}}\\[1mm]

   \STATE $\Hcal \leftarrow \Hcal \cup \{r_{i^\star}\}$\hfill \COMMENT{Add best utility rule to the model}\\
   \STATE $\Ncal \leftarrow \Ncal \setminus \Acal_{i^\star}$ \hfill \COMMENT{Remove final samples}
   \STATE $\Pcal \leftarrow \Pcal \setminus \Bcal_{i^\star}$
    \STATE \blue{$\gamma \leftarrow $ p-value of a \textbf{statistical test} between $y'_j$ and $ e'_j$ from $\Pcal \cup \Ncal$. \hfill \COMMENT{  \cref{sec:construction}, Criterion (ii)}}
   \ENDWHILE

    \STATE \blue{[Optionally] Conduct \textbf{pruning} according to the procedure described at \cref{sec:pruning}.}
       
   \OUTPUT the conjunction $h(x) = \bigwedge_{r \in \Hcal} r(x)$
\end{algorithmic}
\end{algorithm}

\subsection{Model pruning}
\label{sec:pruning}

Note that \cref{thm:leafs} does not guarantee the \emph{minimality} of $\Xb^\star$, i.e., Equations~(\ref{eq:thm-neg-leaf}) and (\ref{eq:thm-pos-leaf}) could be satisfied even if $\Xb_B \cap \Xb^\star \not= \emptyset$.
Hence, we propose the following pruning procedure, which can be applied as long as none of the rules $r_i$ jointly relies on elements from both~$\Xb_A$ and~$\Xb_B$. For example, this holds in the common setting where the rules are threshold functions on the value of a single feature.

\begin{proposition}\label{prop:pruning}
(Pruning)
Assume that Assumptions~\ref{ass:markov} and \ref{ass:faith} hold.
Let $f(\Xb^\star, \epsilon)$ be a conjunction that satisfies \cref{eq:thm-neg-leaf} and \cref{eq:thm-pos-leaf}, but is non-minimal, i.e., $\Xb_B \cap \Xb^\star \not= \emptyset$.
For any $\Xb^\dagger \in \Xb^\star$, we have that 
$$Y \independent_{\!\!P}\, E \mid \Xb^\star \setminus \Xb^\dagger \quad\Longleftrightarrow\quad \Xb^\dagger \not\in \Xb_A\,.$$
\end{proposition}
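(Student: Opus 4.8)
The plan is to reduce the statement to a purely graphical (d-separation) claim and then discharge each direction with one of the two structural assumptions. Since the conjunction $f(\Xb^\star,\epsilon)$ satisfies Equations~\ref{eq:thm-neg-leaf} and \ref{eq:thm-pos-leaf}, invoking \cref{thm:leafs} already gives us $\Xb_A \subseteq \Xb^\star$ and $\Xb_C \cap \Xb^\star = \emptyset$; these two facts are the only structural inputs I will need. By \cref{ass:markov} and \cref{ass:faith} together, for any set $W$ disjoint from $\{Y, E\}$ we have the equivalence $Y \independent_{\!\!P}\, E \mid W \Leftrightarrow Y \independent_{\!\!G}\, E \mid W$, so with $W = \Xb^\star \setminus \Xb^\dagger$ it suffices to prove the graphical statement $Y \independent_{\!\!G}\, E \mid \Xb^\star \setminus \Xb^\dagger \Leftrightarrow \Xb^\dagger \notin \Xb_A$.

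For the direction $\Xb^\dagger \notin \Xb_A \Rightarrow$ independence, I would first note that since $\Xb^\dagger \in \Xb^\star$, $\Xb_C \cap \Xb^\star = \emptyset$, and $\Xb = \Xb_A \cup \Xb_B \cup \Xb_C$, the feature $\Xb^\dagger$ must lie in $\Xb_B$; hence removing it keeps every parent of $Y$ inside the conditioning set, i.e. $\Xb_A \subseteq \Xb^\star \setminus \Xb^\dagger \subseteq \Xb_A \cup \Xb_B$. I would then argue that conditioning on a set that contains all parents of $Y$ and no descendant of $Y$ d-separates $Y$ from $E$: every directed path $E \to \cdots \to \Xb_A \to Y$ is blocked at a conditioned non-collider in $\Xb_A$, while every path entering $Y$ through a child uses the collider $\cdot \to \Xb_C \leftarrow Y$, which stays inactive because neither that child nor any of its descendants is conditioned on. Applying \cref{ass:markov} to this d-separation yields $Y \independent_{\!\!P}\, E \mid \Xb^\star \setminus \Xb^\dagger$.

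For the converse I would argue the contrapositive: if $\Xb^\dagger \in \Xb_A$, then $\Xb^\dagger$ is a parent of $Y$ and, by \cref{ass:envs}, a child of $E$, so $E \to \Xb^\dagger \to Y$ is a path whose unique intermediate node $\Xb^\dagger$ is a non-collider that is \emph{excluded} from the conditioning set $\Xb^\star \setminus \Xb^\dagger$. This path is therefore active, giving $Y \not\independent_{\!\!G}\, E \mid \Xb^\star \setminus \Xb^\dagger$, and the contrapositive of \cref{ass:faith} converts this d-connection into genuine dependence $Y \not\independent_{\!\!P}\, E \mid \Xb^\star \setminus \Xb^\dagger$.

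I expect the main obstacle to be the rigorous verification of the d-separation in the first direction, specifically ruling out that conditioning on the extra variables of $\Xb_B$ reactivates a collider path into $Y$ through $\Xb_C$. This is exactly where the hypothesis that no rule mixes $\Xb_A$ and $\Xb_B$ — together with the graphical fact from \cref{fig:causal-graph} that $\Xb_B$ contains no descendant of $Y$ — is used: it guarantees that $\Xb^\star \setminus \Xb^\dagger$ never contains a descendant of a $\Xb_C$-collider, so no spurious path is opened. Everything else reduces to the local Markov property and faithfulness.
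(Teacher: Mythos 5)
Your proposal is correct and follows essentially the same route as the paper's proof: both directions are obtained by first invoking \cref{thm:leafs} to conclude $\Xb_A \subseteq \Xb^\star$ and $\Xb_C \cap \Xb^\star = \emptyset$, then using the faithfulness assumption for the direction $Y \independent_{\!\!P} E \mid \Xb^\star \setminus \Xb^\dagger \Rightarrow \Xb^\dagger \notin \Xb_A$ (an unblocked path through the removed parent) and the causal Markov assumption for the converse (all paths from $E$ to $Y$ remain blocked when only an element of $\Xb_B$ is removed). Your write-up is simply a more detailed version of the paper's argument, in particular in verifying that conditioning on the remaining $\Xb_B$ variables cannot reactivate a collider path through $\Xb_C$.
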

\begin{proof}

    Since $f(\Xb^\star, \epsilon)$ satisfies Equations~(\ref{eq:thm-neg-leaf}) and (\ref{eq:thm-pos-leaf}), we know that $\Xb_A \subseteq \Xb^\star$ and $\Xb_C \cap \Xb^\star = \emptyset$. We now consider two cases. 
    
    First, let us reason about sufficiency and show that 
    $$Y \independent_{\!\!P}\, E \mid \Xb^\star \setminus \Xb^\dagger \ \Rightarrow \  \Xb^\dagger \not\in \Xb_A\,.$$ This follows directly from the faithfulness assumption (\cref{ass:faith}), since taking $\Xb^\dagger$ from $\Xb_A$ would open at least one path from $E$ to $Y$ in $G$ (see \cref{fig:causal-graph}), contradicting the conditional independence statement. 
    
    Second, let us reason about necessity and show that 
    $$\Xb^\dagger \not\in \Xb_A \ \Rightarrow \ Y \independent_{\!\!P}\, E \mid \Xb^\star \setminus \Xb^\dagger\,.$$
    This follows directly from the causal Markov assumption (\cref{ass:markov}) since removing an element of $\Xb_B$ leaves all paths in $G$ from $E$ to $Y$ blocked.
\end{proof}

Hence, the $f(\Xb^\star, \epsilon)$ can be pruned, iteratively, by applying a conditional independence test to each $\Xb^\dagger \in \Xb^\star$ and removing any rule that makes use of non-causal-parents of~$Y$. 
For performing the pruning in the experiments of \cref{sec:results-simulated}, we used the conditional G-test with significance level  $\alpha = 0.05$.

\subsection{The ICSCM algorithm}
By combining the construction criteria of \cref{sec:construction} (derived from \cref{thm:leafs}) with the optional pruning procedure of \cref{sec:pruning} (derived from \cref{prop:pruning}), we obtain a simple modification of the Set Covering Machine algorithm that can provably \emph{identify} all the causal parents $X_A$ of $Y$. 
\cref{alg:icscm} shows the pseudocode for learning a conjunction with the proposed Invariant Causal Set Covering Machines (ICSCM) algorithm. The differences with the classical SCM (\cref{alg:scm}) are highlighted in blue.

Of note, the runtime complexity of ICSCM is $O(m \cdot |\Rcal| \cdot n)$, where $|\Rcal|$ is typically linear w.r.t.~$|\Xb|$.
This is in sharp contrast with the exponential runtime complexity of non-linear ICP~\citep{Heinze-Deml_Peters_Meinshausen_2018}.

\section{Experiments}\label{sec:results}

We start by conducting experiments in a controlled setting, with simulated data that is guaranteed to satisfy the assumptions of our method (\cref{sec:results-simulated}).
We then compare the ICSCM to its ``non-causal'' variant, SCM, on real-world datasets in which our assumptions are unlikely to hold completely (\cref{{sec:results-real}}).

\subsection{Validating the Theory in simulation}\label{sec:results-simulated}

\begin{table}[t]
\caption{Identification of the causal parents $\Xb_A$ on the simulated data: proportion of 100 training runs where the model relied solely on $\Xb_A$, for an increasing number of distractor features $\Xb_B$ (1 to 7). Worst values colored.}
\label{tab:causal-parents}
\centering
\begin{adjustbox}{max width=\linewidth}
\begin{tabular}{lrrrrrrr}
\toprule
\textbf{Model} & 1 & 2 & 3 & 4 & 5 & 6 & 7 \\
\midrule
SCM & \red{0.00} & \red{0.00} & \red{0.00} & \red{0.00} & \red{0.00} & \red{0.00} & \red{0.00} \\
DT & \red{0.00} & \red{0.00} & \red{0.00} & \red{0.00} & \red{0.00} & \red{0.00} & \red{0.00} \\
ICP & 0.96 & 0.98 & 0.99 & 0.99 & 0.97 & \red{0.09} & \red{0.00} \\
ICSCM & 0.96 & 0.97 & 0.99 & 0.96 & 0.97 & 0.96 & 0.96 \\
\bottomrule
\end{tabular}
\end{adjustbox}
\end{table}

\textbf{Data:~~} We parametrize a discrete Bayesian network that satisfies the assumptions at \cref{sec:problem-setting}. We define two causal parents $\Xb_A = [X_{A1}, X_{A2}]$ and a single descendent $\Xb_C$ for~$Y$. We let $\Xb_B$ be a distractor, unrelated to $Y$ and vary the number of distractors, $|\Xb_B|$, throughout the experiments. Of particular note, the Bayesian network is designed such that the relationship between $Y$ and $\Xb_c$ is less noisy than the relationship between~$Y$ and its causal parents $\Xb_A$. As such, algorithms that are vulnerable to spurious associations will tend to use $\Xb_C$ instead of $\Xb_A$.

More precisely, we define two environments ($\Ecal=\{0,1\}$), each one contains $10^4$ observations. For each observation, the causal parent variables $X_{A_1}$ and $X_{A_2}$ are $\{0,1\}$-valued (binary) randomly generated according to 
\begin{align*}
  P(X_{A_1} {=} 1|E{=}0)=  0.1 &\, , \quad P(X_{A_1} {=} 1|E{=}1) =0.5\,;  \\
P(X_{A_2} {=} 1|E{=}0) =  0.5 &\, ,\quad P(X_{A_2} {=} 1|E{=}1)=0.3\,.
\end{align*}
Note that the environment $E$ affects $\Xb_A$ by changing the distributions of $X_{A_1}$ and $X_{A_2}$. Conformably to \cref{ass:conj}, the value of $Y\in\{0,1\}$ is 
generated as follows (rules $r_1$ and $r_2$ are identities, and $\epsilon_y = 0.05$):
\begin{align*}
Y &= g_y(r_1(\Xb_A, \epsilon_1) \wedge\, r_2(\Xb_A, \epsilon_2), \epsilon_y) \\
&= g_y(X_{A_1} \wedge\ X_{A_2}, \epsilon_y) \\
    &= 
    t - X_{A_1} \wedge\ X_{A_2}, 
    \mbox{ with } t \sim \mathrm {\rm Ber}(\epsilon_y)\,.
\end{align*}
Hence, $Y$ depends only on its causal parents and not directly on $E$. 

Then, $X_C = (1{-}u)Y + uE$, with~$u {\,\sim\,} \mathrm {\rm Ber}(0.05)$.
This reflects the effect of both $Y$ and $E$ on $\Xb_C$. 
Of note, 
$X_C$ is a better predictor of $Y$ than $X_{A_1} \wedge\ X_{A_2}$, because even if the noise term $u = 1$, $X_C$ can still have the same value as $Y$ when $E = Y$.

Finally, the variables in $\Xb_B$ are generated as random variables ${X_{B_i} \sim {\rm Ber}(\frac12) }_{i=1}^{|\Xb_B|}$. The experiments are conducted with $|\Xb_B|$ ranging from $1$ to $7$.

\begin{figure}[t]
\begin{center}
\centerline{\includegraphics[width=\columnwidth,trim={0 0 0 0}, clip]{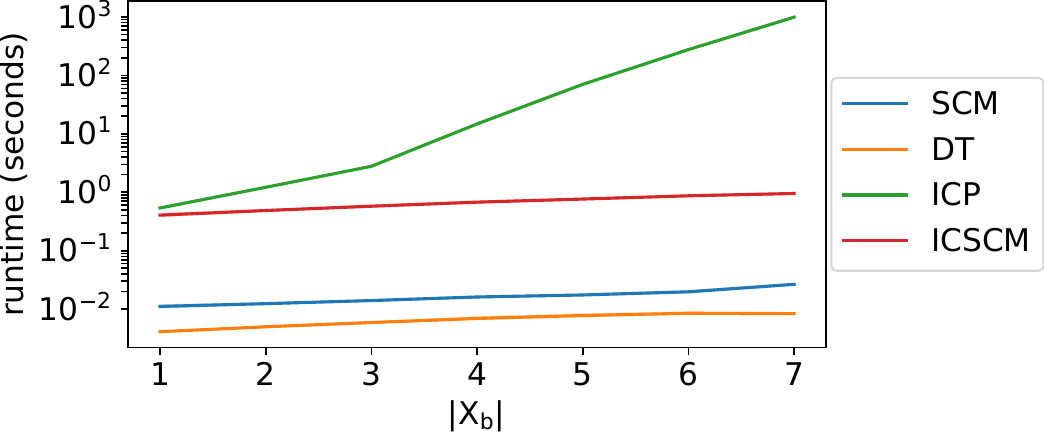}}
\vspace{-2mm}
\caption{Running time w.r.t. the size of $\Xb_B$ on simulated data.
}
\label{fig:runtime}
\end{center}
\end{figure}

\textbf{Baselines:~~} The baselines that we consider are Set Covering Machines (SCM; \citet{scm}), decision trees (DT; \citet{breiman1984}),
and non-linear ICP~(ICP; \citet{Heinze-Deml_Peters_Meinshausen_2018}).
ICP should be robust to spurious associations, while DT and SCM should not.
See \cref{app:experimental-details} for implementation details.

\textbf{Identification of causal parents:~~}
\cref{tab:causal-parents} compares the ability of all methods to identify the causal parents $\Xb_A$ of~$Y$.
As expected, SCM and DT always rely on spurious associations and fail to identify $\Xb_A$.
On the contrary, both ICP and ICSCM succeed at identifying $\Xb_A$ in most cases.
The performance of ICP degrades as dimensionality increases, likely due to type II errors that arise due to the vanishing statistical power of its conditional independence tests.
In contrast, ICSCM appears less affected by dimensionality; this seems to hold even up to hundreds of variables (see \cref{tab:ICSCM-on-high-dimension} in the supplementary material).
In \cref{app:stat-test-errors}, we report additional results that support these claims and provide an extensive discussion on the effect of type I and II errors on both methods.
Finally, it is clear from these results that ICSCM endows SCM with the ability to identify causal parents, which was the main objective of this work.

\textbf{Runtime complexity analysis:~~}\cref{fig:runtime} shows the running time of all methods with respect to dimensionality. Clearly, that of ICP increases exponentially fast, while that of ICSCM increases linearly, as expected in \cref{sec:icscm}. This makes it more amenable to real-world applications where variables are plentiful, provided, of course, that our functional form assumption~(\cref{ass:conj}) realistically holds. 

\subsection{Evaluation on Real-World Data}\label{sec:results-real}

The previous results support the theoretical soundness of our algorithm.
However, they do not assess the method in the wild, where its assumptions are not guaranteed to hold.

\begin{table}[t]
\let\center\empty
\let\endcenter\relax
\centering
\caption{Characteristics of the real-world datasets. The first column indicates the number of samples. The second column shows the total number of variables, followed by the number of causal variables among them (in parentheses).}
\resizebox{.9\width}{!}{\begin{tabular}{lrr}
\toprule
 & \# samples & \# variables (\# causal) \\
\midrule
acsfoodstamps  & 840\,582 & 239 (137) \\
acsincome  & 1\,664\,500 & 232 (66) \\
assistments  & 2\,667\,776 & 26 (20) \\
college scorecard  & 124\,699 & 118 (11) \\
diabetes readmission  & 99\,493 & 183 (37) \\
meps  & 26\,402 & 3\,595 (120) \\
\bottomrule
\end{tabular}
}
\label{tab:datasets-description}
\end{table}

\textbf{Data:~~}
We, therefore, conduct an empirical study on six real-world datasets from the tableshift benchmark~\citep{Hardt2022IsYM}.
For each of these, we have access to a list of variables known to be causes ($\Xb_A$) of the target label~($Y$), among many other features, as defined in \citet{Nastl2024DoCP}.
Moreover, we use \citet{Nastl2024DoCP}'s notion of ``in distribution'' and ``out of distribution'' to define two environments ($E$) for each dataset.
Note, however, that we have no guarantee that the hypotheses presented in \cref{sec:problem-setting} and \cref{fig:causal-graph} hold, as is common in real-world settings.
For example, the relationship between $Y$ and $E$ may be confounded, or the relationship between $\Xb_A$ and $Y$ may not be a conjunction, among others, which makes for a challenging benchmark.

The datasets, which we now outline, are summarized in \cref{tab:datasets-description}.
The ``ACS Foodstamps'' dataset describes people (U.S. citizens) by socioeconomic variables. The task is to predict Foodstamps recipiency, and the environment is based on geographical location (U.S. census Divisions). The ``ACS Income'' dataset describes people (U.S. citizens, employed adults) by socioeconomic variables. The task is to predict whether their income is ``high'' or ``low'', and the environment is based on geographical location (U.S. census Divisions). The ``ASSISTments'' dataset describes school problems in an online learning tool and a student attempting to solve them. The task is to predict whether the student solves the problem, and the environment is defined based on the school of the student. 
The ``College scorecard'' dataset describes United States colleges. The task is to predict the completion rate of students and the environments are defined based on the type of college (Carnegie classification). 
The ``Diabetes readmission'' dataset describes patients with medical information. The task is to predict whether the patients will return to a hospital later; the environment is defined based on the hospital of first admission. 
The ``MEPS'' (Medical Expenditure Panel Survey) dataset describes people by socioeconomic variables. The task is to predict the use of health care; the environment is defined by the type of health insurance. 
Details can be found in \citet{Nastl2024DoCP}.

\textbf{Experimental Details:~~} 
The candidate rules for the models~($\Rcal$) are thresholds on the value of single features (a.k.a. decision stumps).
For each feature, we consider up to $50$ thresholds, obtained by quantizing the range of values it takes across the training set.
Then, for each feature and threshold, two rules are defined: the first returns ``True'' if the feature's value is above the threshold and ``False'' otherwise; the second is the negation of the first.
The maximum size of conjunctions is set to $n = 10$.
For each dataset, we randomly split the data into two halves---the training and testing sets---and repeat this $20$ times.

\begin{figure}[t]
\includegraphics[width=\columnwidth,trim={0 0 0 0}, clip]{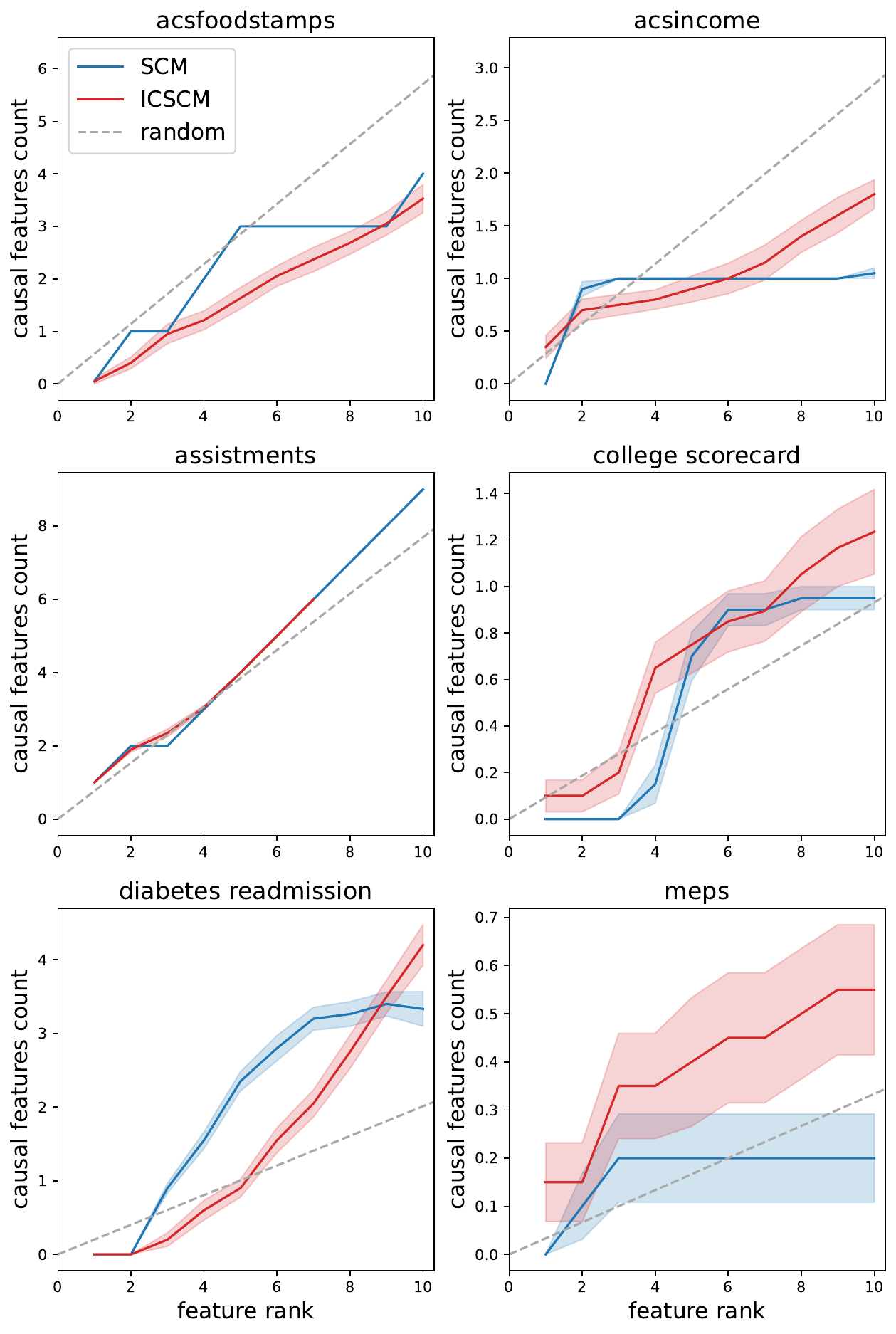}
\vspace{-4mm}
\caption{Number of causal features discovered as a function of the model size (feature rank) for real-world datasets. The solid lines give the average causal feature count over~$20$ repetitions, and the shaded areas report the standard errors. 
The gray dashed line indicates 
the expected number of causal variables selected by a random pick.
The detailed behavior of each model is shown in supplementary Figure~\ref{fig:supp-detailed-real-world-exp}.}
\label{fig:causalselectionrealworld}
\end{figure}

\begin{table}[t]
\let\center\empty
\let\endcenter\relax
\centering
\caption{
Prediction errors averaged over the 20 splits. The values on the test set are followed by the value on the train set in parentheses. The standard deviations are given in the supplementary Table \ref{tab:appendix-complete-prediction-score-table}.}
\label{tab:prediction-score-table}
\resizebox{.9\width}{!}{\begin{tabular}{lll}
\toprule
 & SCM & ICSCM \\
\midrule
acsfoodstamps & 0.19 (0.19) & 0.19 (0.19) \\
acsincome & 0.27 (0.27) & 0.32 (0.32) \\
assistments & 0.07 (0.07) & 0.11 (0.11) \\
college scorecard & 0.08 (0.08) & 0.11 (0.11) \\
diabetes readmission & 0.38 (0.38) & 0.46 (0.46) \\
meps & 0.27 (0.26) & 0.34 (0.33) \\
\bottomrule
\end{tabular}
}
\end{table}

\textbf{Results:~~}
We compare ICSCM to its non-causal counterpart, SCM, based on two factors: (i) the ability to identify causal features and (ii) the prediction error of models.

First, we compare the number of causal features used by each algorithm as a function of model size.
To support the ability to identify causal features \emph{better than chance}, we include a random baseline corresponding to the expected number of causal variables that would have been selected by a random pick without replacement.
This is simply the expectation of a hypergeometric distribution with parameters derived from \cref{tab:datasets-description}.
The results, averaged over 20 trials, are illustrated in \cref{fig:causalselectionrealworld}.
Overall, we observe that, for $n = 10$, ICSCM identifies more causal features than SCM for 4/6 datasets and that both methods are on par for 1/6 datasets.
Note that it is not surprising that SCM also relies on causal features sometimes, as these may be good predictors of $Y$.
Finally, note that ICSCM identifies more causal features than the random baseline for 4/6 datasets, while the SCM does so for 2/6 datasets.
Qualitatively, we observe that ICSCM generally tends to keep selecting causal features as the model size increases, while the number of causal features selected by the SCM could stagnate once a certain count is reached.

Second, we assess the test error of the models learned using SCM and ICSCM.
The results, averaged over 20 trials, are presented in Table~\ref{tab:prediction-score-table}.
Overall, SCM tends to build models that are slightly more accurate than those of ICSCM.
This is in line with the observations of \citet{Hardt2022IsYM}, which showed that using only causal variables leads to reduced predictive accuracy, emphasizing a tradeoff between understanding and predicting.
Yet, the accuracy of ICSCM models does not differ drastically from those of SCM.

Given that real datasets are likely to break the assumptions underlying ICSCM, these results exhibit a good tendency, showing that the modifications proposed to the original SCM algorithms favor the discovery of causal variables.

\section{Conclusion}\label{sec:discussion}
This work introduced ICSCM, a learning algorithm that builds conjunctive and disjunctive models that, in some settings, are guaranteed to rely exclusively on the causal parents of a variable of interest. The algorithm is supported by a rigorous theoretical foundation that exploits invariances that hold in a multi-environment setting to avoid spurious associations.
In contrast with previous work, such as non-linear ICP~\citep{Heinze-Deml_Peters_Meinshausen_2018},
ICSCM leverages a functional form assumption to considerably reduce the number of statistical tests required, leading to a polynomial-time algorithm that is more amenable to practical applications.
Results on simulated data support the validity of the proposed algorithm and theory, and results on real-world datasets suggest that it is better at identifying causal variables than the original SCM algorithm.
In future work, we will consider relaxations of the current assumptions (e.g., variants of graph $G$, unobserved confounding),
explore the generalization of our approach to decision tree and random forest predictors,
and further explore applications to real-world problems where our assumptions are likely to hold, such as the identification of biomarkers in high-dimensional genomics data~\citep{Drouin_2019}.

 \section*{Acknowledgements}

\begin{acknowledgements} 
The authors are grateful to the following people for their helpful comments and suggestions: 
Philippe Brouillard, Florence Clerc, Maxime Gasse, Sébastien Lachapelle, Alexandre Lacoste, Alexandre Larouche, Étienne Marcotte, Pierre-André Noël, Hector Palacios, Dhanya Sridhar and Julius von Kügelgen.
Pascal Germain is supported by the NSERC Discovery grant RGPIN-2020-07223.
\end{acknowledgements}

\bibliography{refs}

\begin{thebibliography}{12}
\providecommand{\natexlab}[1]{#1}
\providecommand{\url}[1]{\texttt{#1}}
\expandafter\ifx\csname urlstyle\endcsname\relax
  \providecommand{\doi}[1]{doi: #1}\else
  \providecommand{\doi}{doi: \begingroup \urlstyle{rm}\Url}\fi

\bibitem[Breiman et~al.(1984)Breiman, Friedman, Olshen, and Stone]{breiman1984}
Leo Breiman, Jerome Friedman, Richard Olshen, and Charles Stone.
\newblock \emph{Classification and Regression Trees}.
\newblock Chapman and Hall/CRC, 1984.

\bibitem[Bühlmann(2020)]{Buhlmann_2020}
Peter Bühlmann.
\newblock Invariance, causality and robustness.
\newblock \emph{Statistical Science}, 35\penalty0 (3), Aug 2020.
\newblock ISSN 0883-4237.
\newblock \doi{10.1214/19-STS721}.

\bibitem[Drouin et~al.(2019)Drouin, Letarte, Raymond, Marchand, Corbeil, and Laviolette]{Drouin_2019}
Alexandre Drouin, Gaël Letarte, Frédéric Raymond, Mario Marchand, Jacques Corbeil, and François Laviolette.
\newblock Interpretable genotype-to-phenotype classifiers with performance guarantees.
\newblock \emph{Scientific Reports}, 9\penalty0 (11):\penalty0 4071, Mar 2019.
\newblock ISSN 2045-2322.
\newblock \doi{10.1038/s41598-019-40561-2}.

\bibitem[Glymour et~al.(2019)Glymour, Zhang, and Spirtes]{glymour2019review}
Clark Glymour, Kun Zhang, and Peter Spirtes.
\newblock Review of causal discovery methods based on graphical models.
\newblock \emph{Frontiers in genetics}, 10:\penalty0 524, 2019.

\bibitem[Hardt and Kim(2022)]{Hardt2022IsYM}
Moritz Hardt and Michael~P. Kim.
\newblock Is your model predicting the past?
\newblock \emph{Proceedings of the 3rd ACM Conference on Equity and Access in Algorithms, Mechanisms, and Optimization}, 2022.
\newblock URL \url{https://api.semanticscholar.org/CorpusID:249953809}.

\bibitem[Heinze-Deml et~al.(2018)Heinze-Deml, Peters, and Meinshausen]{Heinze-Deml_Peters_Meinshausen_2018}
Christina Heinze-Deml, Jonas Peters, and Nicolai Meinshausen.
\newblock Invariant causal prediction for nonlinear models.
\newblock \emph{Journal of Causal Inference}, 6\penalty0 (2), 2018.

\bibitem[Koller and Friedman(2009)]{koller2009probabilistic}
Daphne Koller and Nir Friedman.
\newblock \emph{Probabilistic graphical models: principles and techniques}.
\newblock MIT press, 2009.

\bibitem[Marchand and Shawe-Taylor(2002)]{scm}
Mario Marchand and John Shawe-Taylor.
\newblock The set covering machine.
\newblock \emph{Journal of Machine Learning Research}, 3\penalty0 (4-5):\penalty0 723--746, 2002.

\bibitem[Nastl and Hardt(2024)]{Nastl2024DoCP}
Vivian~Y. Nastl and Moritz Hardt.
\newblock Do causal predictors generalize better to new domains?
\newblock In \emph{Neural Information Processing Systems}, 2024.
\newblock URL \url{https://api.semanticscholar.org/CorpusID:267681823}.

\bibitem[Pedregosa et~al.(2011)Pedregosa, Varoquaux, Gramfort, Michel, Thirion, Grisel, Blondel, Prettenhofer, Weiss, Dubourg, et~al.]{pedregosa2011scikit}
Fabian Pedregosa, Ga{\"e}l Varoquaux, Alexandre Gramfort, Vincent Michel, Bertrand Thirion, Olivier Grisel, Mathieu Blondel, Peter Prettenhofer, Ron Weiss, Vincent Dubourg, et~al.
\newblock Scikit-learn: Machine learning in python.
\newblock \emph{the Journal of machine Learning research}, 12:\penalty0 2825--2830, 2011.

\bibitem[Peters et~al.(2016)Peters, Bühlmann, and Meinshausen]{Peters_Buhlmann_Meinshausen_2016}
Jonas Peters, Peter Bühlmann, and Nicolai Meinshausen.
\newblock Causal inference by using invariant prediction: identification and confidence intervals.
\newblock \emph{Journal of the Royal Statistical Society. Series B (Statistical Methodology)}, 78\penalty0 (5):\penalty0 947–1012, 2016.
\newblock ISSN 1369-7412.

\bibitem[Szymczak et~al.(2009)Szymczak, Biernacka, Cordell, Gonz{\'a}lez-Recio, K{\"o}nig, Zhang, and Sun]{szymczak2009machine}
Silke Szymczak, Joanna~M Biernacka, Heather~J Cordell, Oscar Gonz{\'a}lez-Recio, Inke~R K{\"o}nig, Heping Zhang, and Yan~V Sun.
\newblock Machine learning in genome-wide association studies.
\newblock \emph{Genetic epidemiology}, 33\penalty0 (S1):\penalty0 S51--S57, 2009.

\end{thebibliography}

\newpage

\onecolumn

\title{Invariant Causal Set Covering Machines\\(Supplementary Material)}
\maketitle

\appendix

\section{Experimental details}

The code for all the experiments is available at: \href{https://anonymous.4open.science/r/icscm-experiments-6FE5/}{https://anonymous.4open.science/r/icscm-experiments-6FE5/}.

\subsection{Baselines and implementation details}\label{app:experimental-details}

For the Set Covering Machine (SCM) and classification tree (DT) baselines, as well as ICSCM, we conducted a hyperparameter search using
$5$-fold cross-validation and selected the values that led to the highest binary accuracy, on average, over all folds.
Details are provided below:

\begin{itemize}
    \item Set Covering Machine (SCM; \citet{scm}): We used the implementation available at \href{https://github.com/aldro61/pyscm}{https://github.com/aldro61/pyscm} (version 1.1.0) and considered the following hyperparameter values for trade-off $p$: $\{ 0.1, 0.5, 0.75, 1.0, 2.5, 5, 10 \}$. The models built were conjunctions. 
    \item Classification trees (DT; \citet{breiman1984}): We used the implementation available in Scikit-Learn (version 1.2.2; \citet{pedregosa2011scikit}) and considered the following hyperparameter values: i) maximum depth: $\{1, 2, 3, 4, 5, 10\}$, ii) minimum samples split: $\{2, 0.01, 0.05, 0.1, 0.3\}$.
    \item ICSCM: We implemented the pseudo-code showed in \cref{alg:icscm} in Python and considered the following hyperparameter values for utility $p$: $\{ 0.1, 0.5, 0.75, 1.0, 2.5, 5, 10 \}$. The models built were conjunctions. Conditional independence was tested using a $\chi^2$ test with $\alpha = 0.05$. For the pruning procedure, we used the conditional G-test implementation available at \href{https://github.com/keiichishima/gsq}{https://github.com/keiichishima/gsq}, also with $\alpha = 0.05$.
\end{itemize}

For the ICP baseline, we reimplemented the method in \citet{Heinze-Deml_Peters_Meinshausen_2018}. The implementation is available in our main codebase. Conditional independence was tested using a conditional G test with $\alpha = 0.05$. We used the conditional G-test implementation available at \href{https://github.com/keiichishima/gsq}{https://github.com/keiichishima/gsq}.

\subsection{Detailed results of real-world data experiments}

The \cref{fig:supp-detailed-real-world-exp} shows the 20 individual curves of SCM and ICSCM that are averaged in \cref{fig:causalselectionrealworld}.  
The \cref{tab:appendix-complete-prediction-score-table} extends the results of \cref{tab:prediction-score-table} by including the standard deviation over $20$ repetitions.

\begin{figure}
    \centering
    \includegraphics[width=0.48\textwidth]{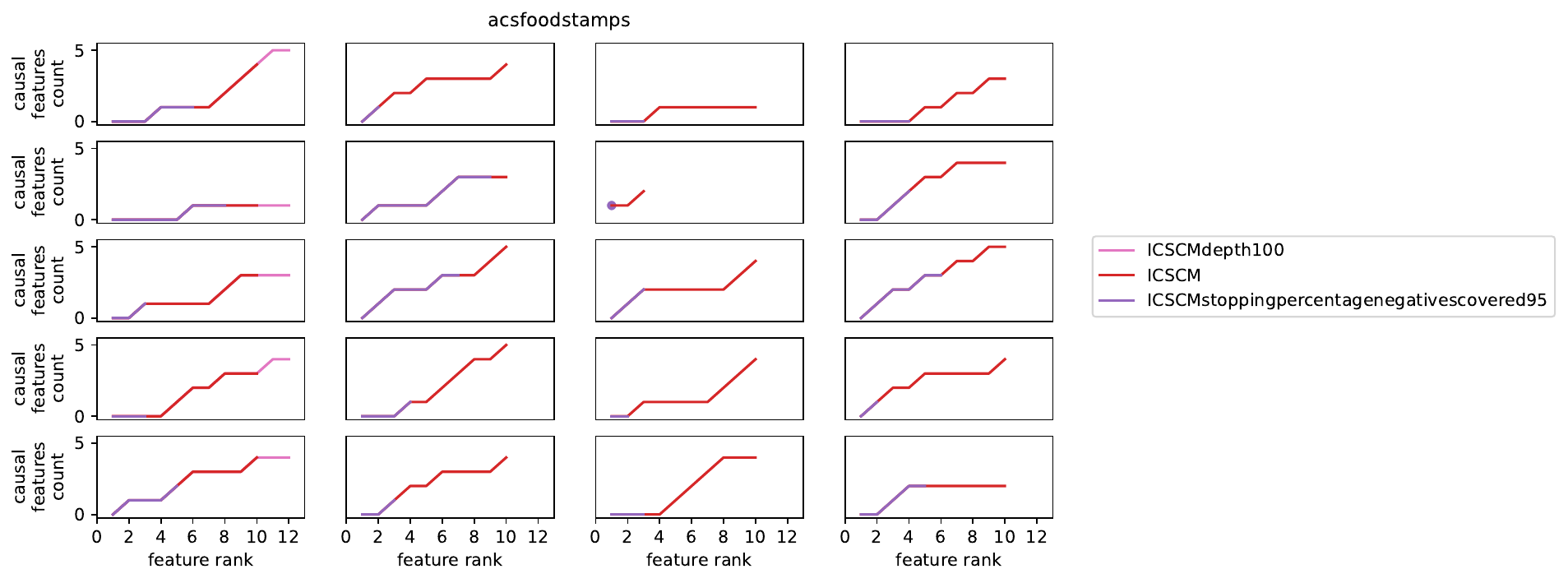}
    \includegraphics[width=0.48\textwidth]{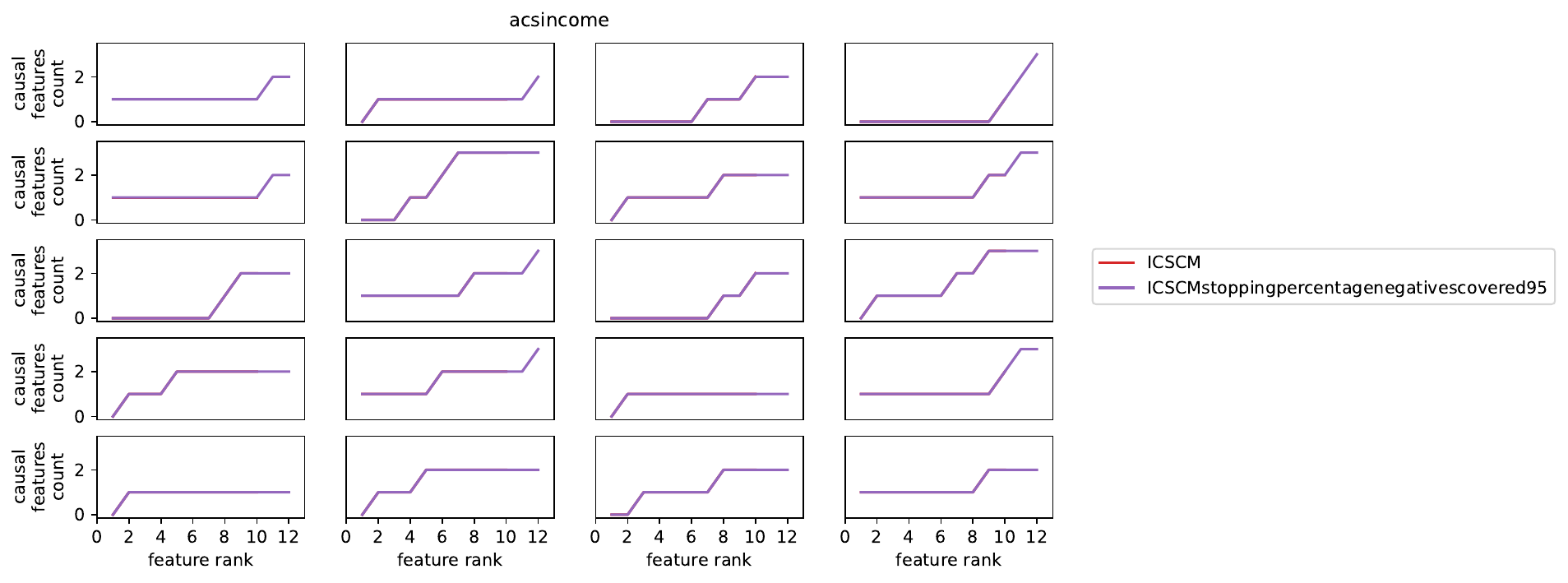}
    \includegraphics[width=0.48\textwidth]{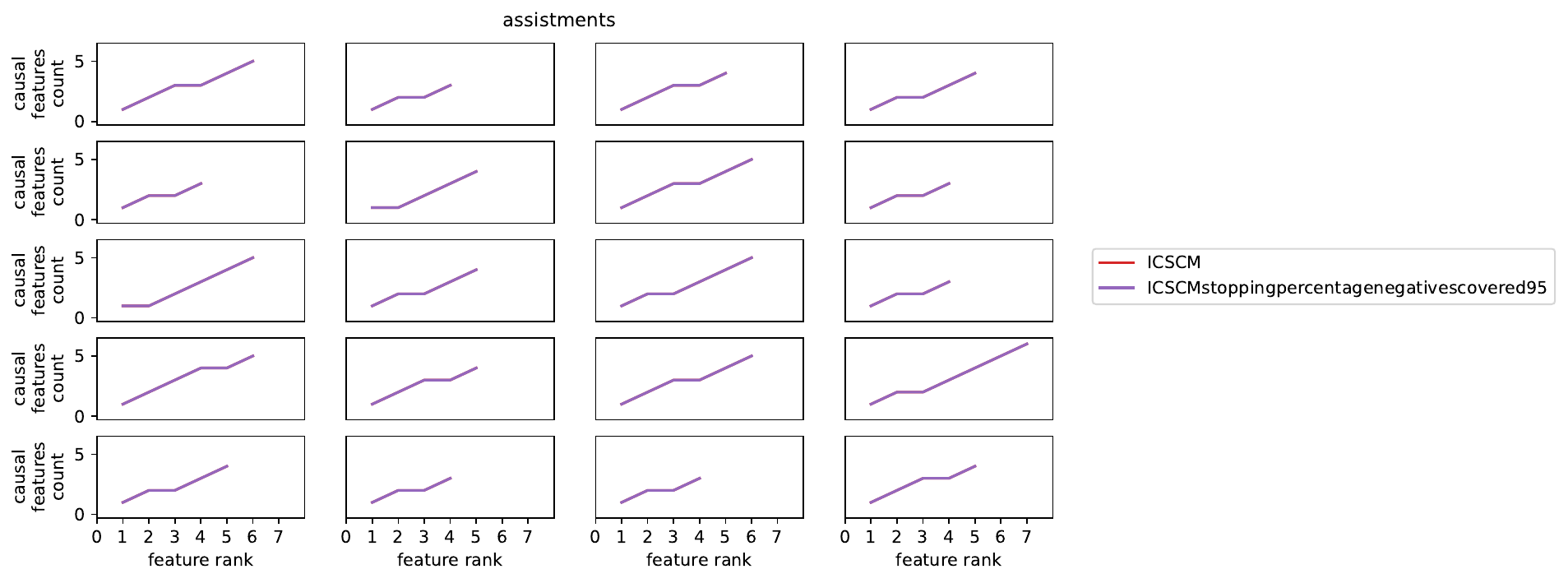}
    \includegraphics[width=0.48\textwidth]{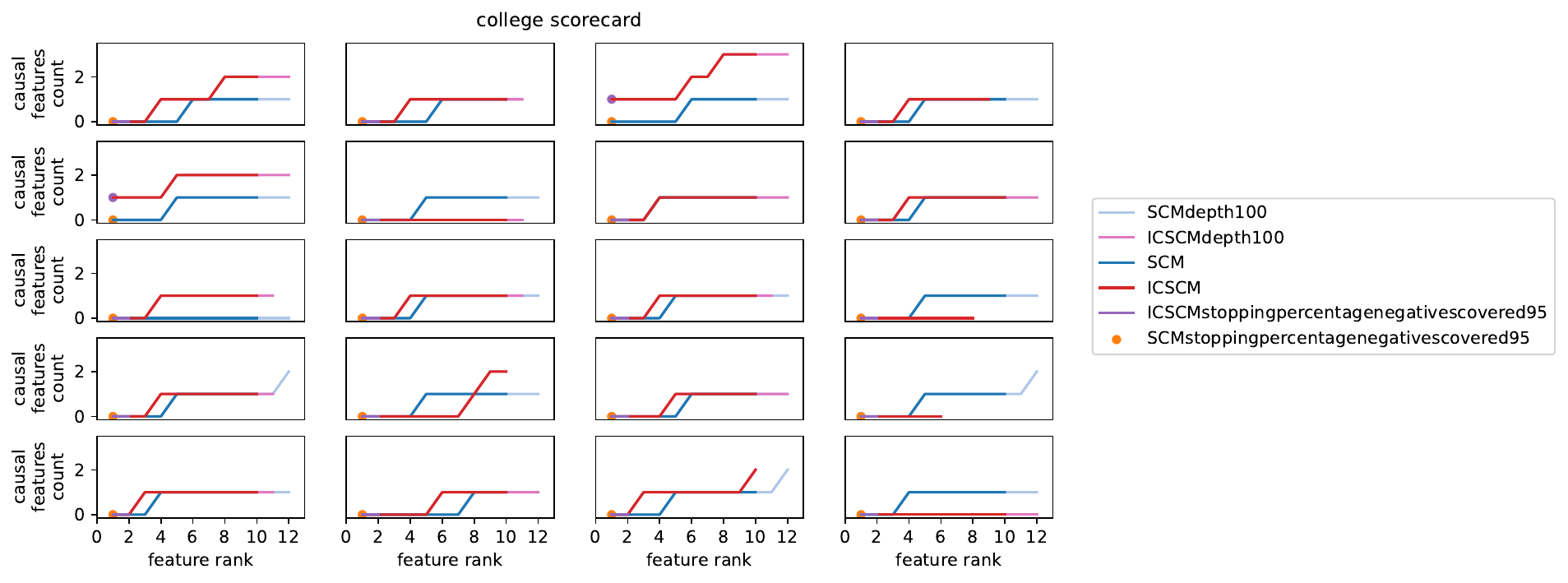}
    \includegraphics[width=0.48\textwidth]{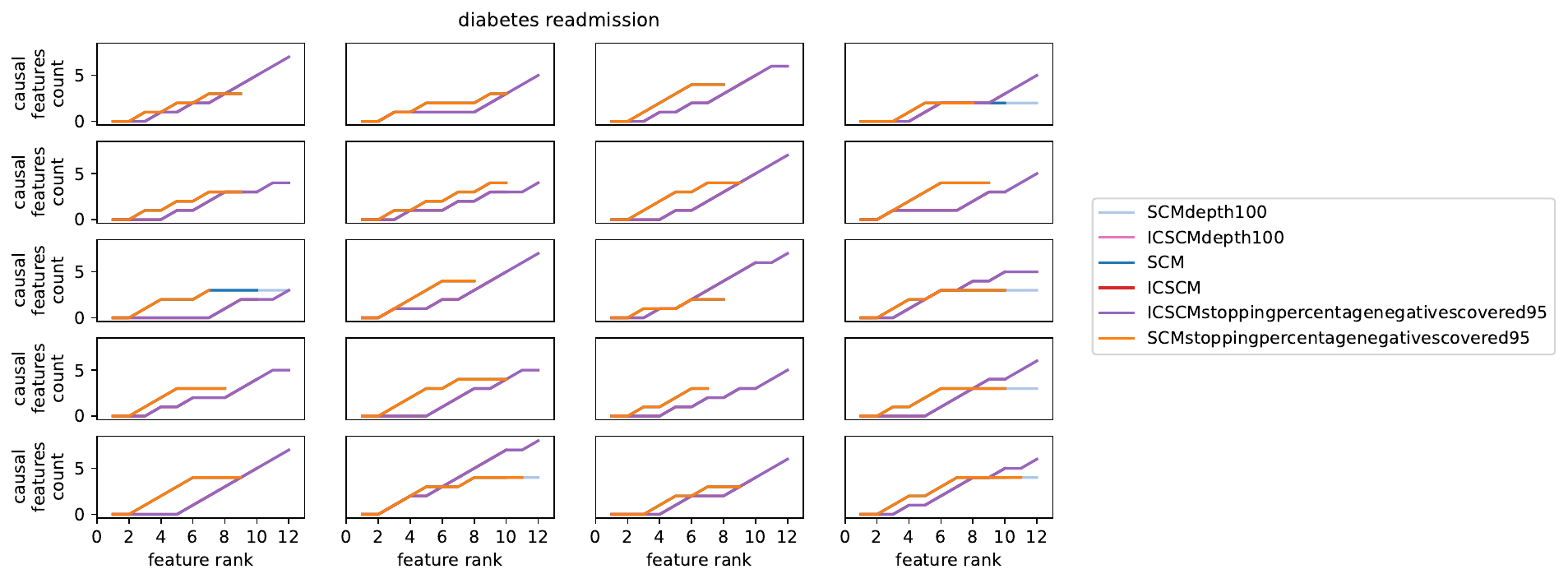}
    \includegraphics[width=0.48\textwidth]{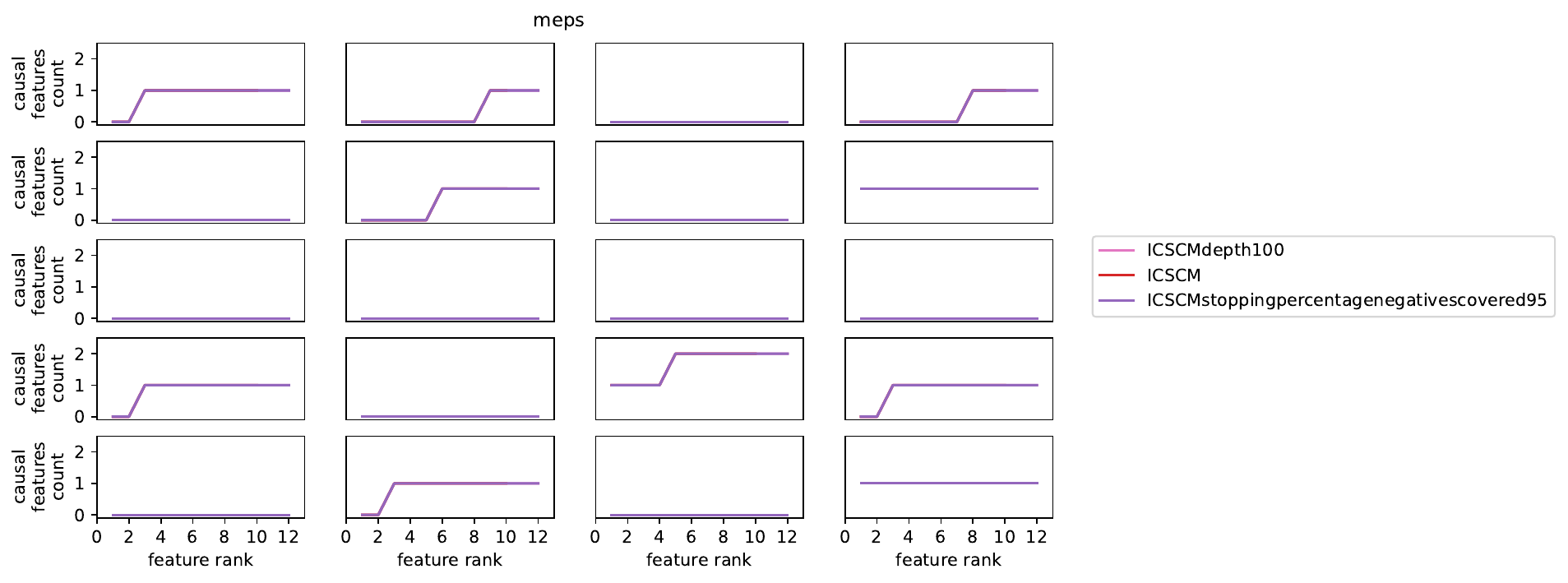}
    \caption{Detailed behavior of the models on each split, showing the number of causal variables selected.}
    \label{fig:supp-detailed-real-world-exp}
\end{figure}

\begin{table}[t]
\let\center\empty
\let\endcenter\relax
\centering
\caption{Prediction error of the models averaged on the 20 splits. The standard deviation is indicated in parentheses.}
\resizebox{.9\width}{!}{
\begin{tabular}{lcccc}
\toprule
 & \multicolumn{2}{c}{Train} & \multicolumn{2}{c}{Test} \\
\cmidrule(lr){2-3} \cmidrule(lr){4-5}
 & SCM & ICSCM & SCM & ICSCM \\
\midrule
acsfoodstamps & 0.19 (0.0017) & 0.19 (0.0050) & 0.19 (0.0007) & 0.19 (0.0044) \\
acsincome & 0.27 (0.0013) & 0.32 (0.0222) & 0.27 (0.0010) & 0.32 (0.0225) \\
assistments & 0.07 (0.0005) & 0.11 (0.0299) & 0.07 (0.0001) & 0.11 (0.0298) \\
college scorecard & 0.08 (0.0006) & 0.11 (0.0060) & 0.08 (0.0007) & 0.11 (0.0059) \\
diabetes readmission & 0.38 (0.0016) & 0.46 (0.0058) & 0.38 (0.0015) & 0.46 (0.0052) \\
meps & 0.26 (0.0028) & 0.33 (0.0357) & 0.27 (0.0030) & 0.34 (0.0328) \\

\bottomrule
\end{tabular}
}
\label{tab:appendix-complete-prediction-score-table}
\end{table}

\section{Robustness to Type I and Type II Errors}\label{app:stat-test-errors}

\begin{table}[t]
\caption{Identification of the causal parents $\Xb_A$: proportion of 100 training runs where the model relied solely on $\Xb_A$, for an increasing number of distractor features $\Xb_B$ (1 to 200).}
\label{tab:ICSCM-on-high-dimension}
\centering
\begin{adjustbox}{max width=0.8\linewidth}
\begin{tabular}{lrrrrrrrrrrrrrr}
\toprule
\textbf{$\Xb_B$} & 1 & 2 & 3 & 4 & 5 & 6 & 7 & 10 & 15 & 20 & 25 & 50 & 100 & 200 \\
\midrule
ICSCM & 0.96 & 0.97 & 0.99 & 0.96 & 0.97 & 0.96 & 0.96 & 0.89 & 0.89 & 0.86 & 0.96 & 0.90 & 0.91 & 0.94 \\
\bottomrule
\end{tabular}
\end{adjustbox}
\end{table}

\begin{figure}
    \centering
    \includegraphics[width=0.48\textwidth]{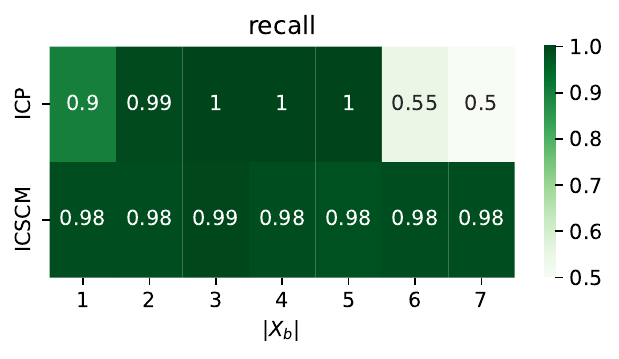}
    \includegraphics[width=0.48\textwidth]{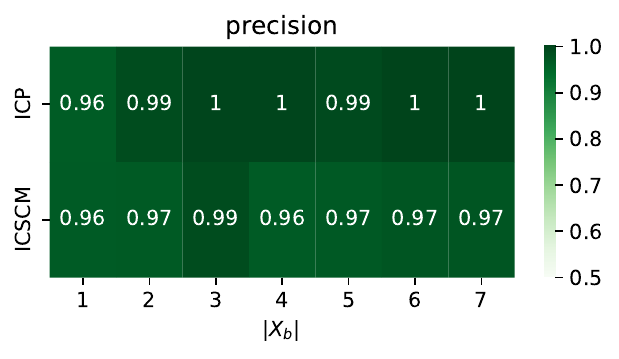}
    \caption{Precision and recall metrics on the task of identifying the set of causal parents for ICP and ICSCM. The scores are computed for several values of $|X_B|$, and averaged over 100 randomly generated datasets, using the experimental design presented in \cref{sec:results-simulated}.}
    \label{fig:icp-precision-recall}
\end{figure}

While algorithms like ICP and ICSCM offer strong theoretical guarantees, their empirical performance is subject to the reliability of the (conditional) independence tests that they perform. Empirical phenomena such as \emph{type I} errors (false positive: finding dependence when there is none) and \emph{type II} errors (false negative: finding independence when there is dependence) can affect their performance in practice. Type I errors can be controlled via the $\alpha$ threshold on the p-value of the statistical tests. However, type II errors are more difficult to control, as the false negative rate $\beta$ depends on the statistical power of a test, which in turn depends on factors such as the effect size (which we do not control) and the number of available data samples. Here, we discuss the effect of each type of error on both algorithms.

\paragraph{ICP:~~} To find the set of causal parents $\Xb_A$, ICP tests the independence of $Y$ and $E$ conditioned on every possible set of variables, resulting in $2^{|\Xb|}$ conditional independence tests.
Then, the algorithm takes the intersection of all sets that were found to lead to independence (see \citet{Heinze-Deml_Peters_Meinshausen_2018}, line 4 of Algorithm 1 in Appendix B).
The intersection has an effect similar to the pruning procedure of ICSCM (see \cref{prop:pruning}) and serves to filter out $\Xb_B$ from the solution.
ICP is quite robust to type I errors since such errors result in discarding only a few of the sets that contain $\Xb_A$. Apart from rare cases, such as making a type I error for the set $\{\Xb_A\}$ and not for all sets containing both $\Xb_A$ and $\Xb_B$, the intersection renders the algorithm robust to type I errors.
On the other hand, ICP is vulnerable to type II errors, which can cause it to incorrectly include a set that does not lead to independence, e.g., $\{\Xb_C\}$ or $\{X_{A1}\}$ (but missing some $X_{Ai} \in \Xb_A$) in the intersection.
If this happens, the intersection returns either the empty set or a partial set of causal parents.
As the dimensionality of $\Xb$ increases (for a fixed sample size), the power of the statistical tests decreases and the probability $\beta$ of making such errors increases, amplifying the problem.
We hypothesize that this is what causes the poor performance of ICP for the larger sizes of $\Xb_B$ at \cref{tab:causal-parents}.
This is supported by the results at \cref{fig:icp-precision-recall}, which show that the \emph{recall} of ICP on the task of identifying the set of causal parents drops for $|\Xb_B| \geq 6$.
In contrast, note that the \emph{precision} remains stable (see \cref{fig:icp-precision-recall}), illustrating the robustness of this method to type I errors.

\paragraph{ICSCM:~~} We separate the discussion of how ICSCM can be affected by type I and II errors into three parts, based on the different stages of the algorithm. For simplicity, let us assume that the set of candidate rules $\Rcal$ contains a single rule per causal parent.
\begin{enumerate}
    \item \textbf{Effect on \cref{eq:thm-neg-leaf}:~~} This criterion is used to select which rules are permitted to be added to the model. A type I error has the effect of rejecting a rule that actually satisfies the criterion, e.g., rejecting a causal parent in $\Xb_A$. If this happens, the resulting conjunction might not contain all the causal parents. However, note that, even if a causal parent is rejected at one stage of building the conjunction, the data filtering that occurs at the end of every iteration in \cref{alg:icscm} results in re-testing the same rule with a subset of the data at the next iteration, which might offer some resistance to type I errors. As for type II errors, these correspond to incorrectly believing that the criterion is satisfied and could result in adding rules that depend on $\Xb_C$ to the conjunction. In both cases, if such errors were to be made, the stopping criterion at \cref{eq:thm-pos-leaf} would not be satisfiable due to unblocked paths between $Y$ and $E$ in $G$. In terms of precision and recall w.r.t. the causal parents, type I and type II errors would result in lower recall and precision, respectively.
    
    \item \textbf{Effect on \cref{eq:thm-pos-leaf}:~~} This criterion is used to determine when to stop adding rules to the model. Here, a type I error would result in a failure to stop. However, note that, even if the algorithm failed to stop, as long as all $R_i$ have been added to the conjunction, the algorithm should find that no other $R_j$ satisfies \cref{eq:thm-neg-leaf} and stop, offering some robustness to such errors. As for type II errors, these would result in incorrectly concluding that the criterion is satisfied, resulting in premature stopping and missing causal parents in $\Xb_A$. Hence, type I and type II errors would result in lower precision and recall, respectively.

    \item \textbf{Effect on \cref{prop:pruning}:~~} This result is the foundation for the pruning procedure of ICSCM. Here, a type I error would result in incorrectly keeping a variable in $\Xb_B$ in the conjunction instead of pruning it. In contrast, a type II error would result in incorrectly removing a variable in $\Xb_A$ from the conjunction. Type I and type II errors would therefore result in lower precision and recall, respectively.
\end{enumerate}

Finally, note that while we typically cannot control all the factors that govern type II errors ($\beta$), there is one key element that distinguishes ICSCM from ICP: ICP must conduct conditional independence testing for every possible set of parents, resulting in large conditioning sets. In contrast, ICSCM's tests are only conditioned on a number of variables that grows linearly with the length of the conjunction. As such, ICSCM's tests may have greater power and the algorithm may be less affected by type II errors.
The results at \cref{fig:icp-precision-recall}, show that both the precision and recall of ICSCM remain high as the number of variables increases, in contrast with ICP whose recall decreases significantly.

\end{document}